\newtheorem{assumption}{Assumption}
\newtheorem{theorem}{Theorem}
\newtheorem{lemma}{Lemma}
\newtheorem{proof}{Proof}\usepackage{algorithm}
\definecolor{cvprblue}{rgb}{0.21,0.49,0.74}
\title{Filter, Obstruct and Dilute: Defending Against Backdoor Attacks on Semi-Supervised Learning}
\author{
Xinrui Wang$^{1,2}$ \quad Chuanxing Geng$^{1,2}$ \quad Wenhai Wan$^{3}$ \quad Shao-yuan Li$^{1,2}$\quad Songcan Chen$^{1,2}$\\
\small $^{1}$College of Computer Science and Technology, Nanjing University of Aeronautics and Astronautics\\
\small $^{2}$MIIT Key Laboratory of Pattern Analysis and Machine Intelligence\\
\small $^{3}$ School of Computer Science and Technology, Huazhong University of Science and Technology
}
\begin{document}
\maketitle

\begin{abstract}
    Recent studies have verified that semi-supervised learning (SSL) is vulnerable to data poisoning backdoor attacks. Even a tiny fraction of contaminated training data is sufficient for adversaries to manipulate up to 90\% of the test outputs in existing SSL methods. Given the emerging threat of backdoor attacks designed for SSL, this work aims to protect SSL against such risks, marking it as one of the few known efforts in this area. Specifically, we begin by identifying that the spurious correlations between the backdoor triggers and the target class implanted by adversaries are the primary cause of manipulated model predictions during the test phase. To disrupt these correlations, we utilize three key techniques: Gaussian Filter, complementary learning and trigger mix-up, which collectively filter, obstruct and dilute the influence of backdoor attacks in both data pre-processing and feature learning. Experimental results demonstrate that our proposed method, Backdoor Invalidator (BI), significantly reduces the average attack success rate from 84.7\% to 1.8\% across different state-of-the-art backdoor attacks. It is also worth mentioning that BI does not sacrifice accuracy on clean data and is supported by a theoretical guarantee of its generalization capability. 
\end{abstract}
\vspace{-5pt}

% In contrast to adversarial attacks that typically involve the model training, backdoor attackers implant the backdoors through injecting poisoned data into the training set.

\section{Introduction} \label{intro}
Semi-supervised learning (SSL) has made strides in leveraging small amounts of labeled data with abundant unlabeled data, showing potential for practical applications by reducing the need for extensive manual annotation\cite{learning2006semi}. However, recent studies have revealed that existing SSL methods are highly susceptible to specific types of data poisoning backdoor attacks. Adversaries can maliciously manipulate the predictions of the attacked model in the test phase by injecting a backdoor trigger (i.e., a particular pattern like small white patches on some specific position or certain kinds of noise) into a few benign images during training \cite{carlini2021poisoning, li2024survey}. As depicted in Figure \ref{framework}, this situation is even worse in SSL, where adversaries can manipulate about 90\% of the SSL model's output during inference by embedding triggers into a tiny portion of unlabeled data \cite{zeng2021rethinking}.

In contrast to the well-developed defense methods for backdoor attacks in supervised learning, effective methods to mitigate these threats for backdoor attacks specifically designed for SSL are still lacking. The primary reason is that those backdoor defense methods designed for supervised learning heavily rely on the quantity of labeled data. However, in SSL, the extremely limited supervised information makes them ineffective or hard to implement.

% To this end, the objective of this work is to develop a backdoor-robust SSL method without compromising its original high performance on clean data.

% Most existing backdoor defense strategies, which are based on the characteristics of backdoor embedding functions or the behaviors of backdoored models over the labeled data\cite{dong2021black}, cannot be directly applied in defending SSL. To this end, techniques are urgently needed to safeguard against unintended model predictions at test time resulting from subtle unlabeled data manipulation.

\begin{figure}[t]
\centering
\includegraphics[width=0.95\columnwidth]{./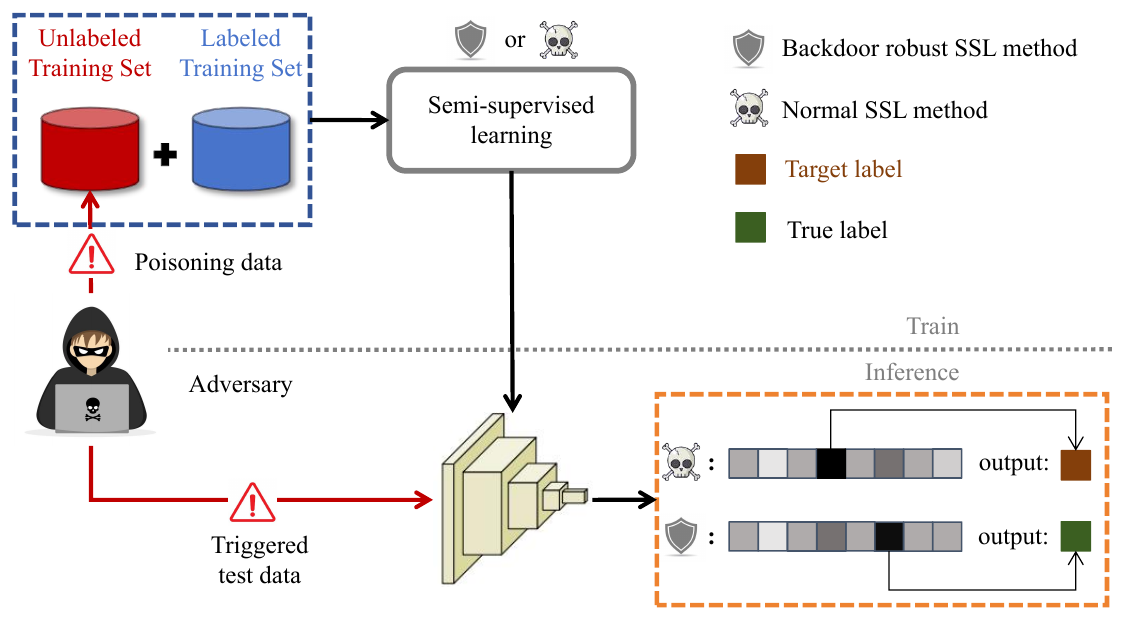}
\caption{Following previous settings\cite{yan2021deep}, poisoned data is exclusively introduced into the unlabeled set, as the labeled set is typically subjected to careful inspection. Our goal is to prevent adversaries from manipulating test data outputs from the true label to the targeted one under the poisoned dataset.}
\vspace{-5pt}
\label{framework}
\end{figure}

\begin{figure*}[t]
\centering
\includegraphics[width=0.87\textwidth]{./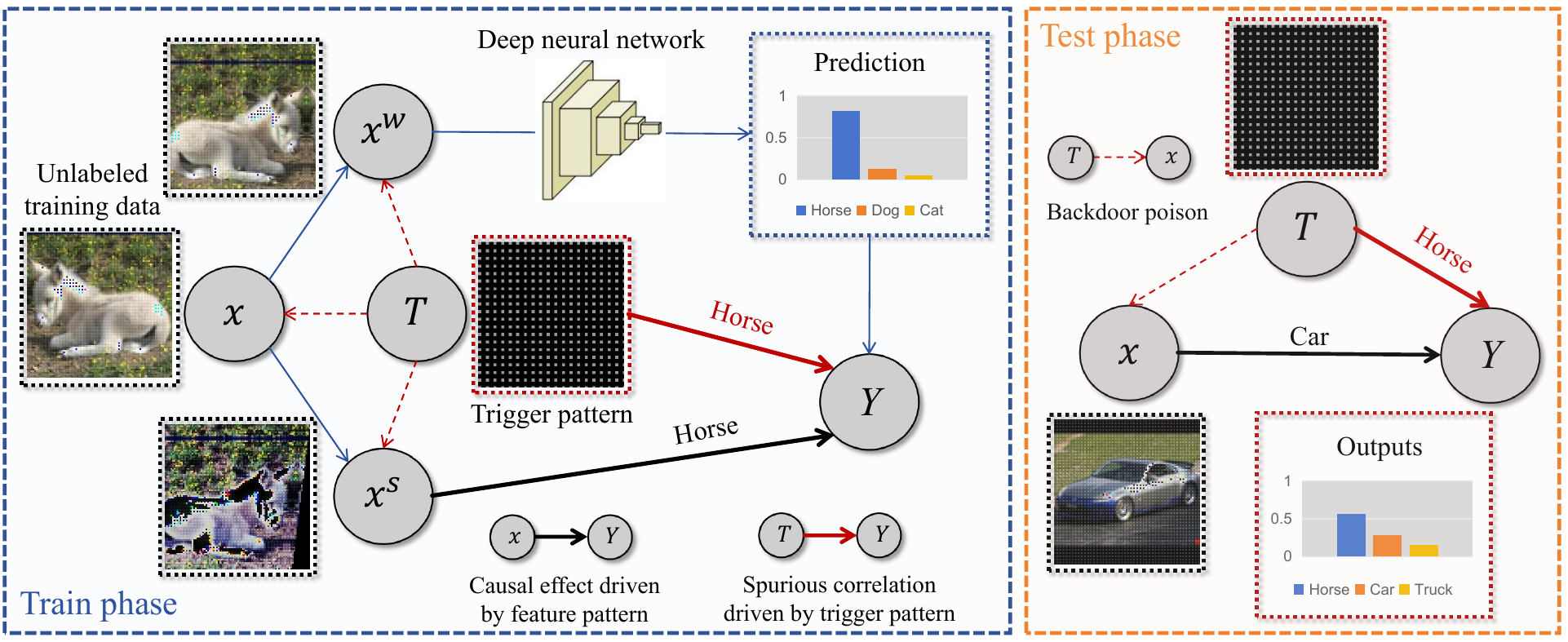}
\caption{Visualization of the mechanism behind successful backdoor attacks in SSL from a casual perspective.}
\label{causal_explanation}
\end{figure*}

% While during the test, the presence of backdoor triggers will act as a confounder.

% Before proposing a backdoor-resistant SSL method, it is essential to understand the rationale behind the susceptibility of existing SSL methods to backdoor attacks. We identify that pseudo-labeling \cite{sohn2020fixmatch}, the cornerstone of modern SSL, plays a crucial role in linking the backdoor trigger with the target label. Typically, deep neural networks (DNNs) prioritize learning the simplest and most discriminative feature-category mapping relationships, which often include the injected trigger pattern in backdoor attacks\cite{zhang2023backdoor}. During the training, SSL models tend to assign high-confidence pseudo labels to the poisoned unlabeled data especially in the early training stage, thereby establishing a spurious correlation between the backdoor trigger and the target label. Figure \ref{causal_explanation} illustrates how DNN models capture both the causal effect between unlabeled data and labels, driven by feature patterns, and the spurious correlation introduced by trigger patterns. When the spurious correlation dominates the causal effect of the feature pattern during testing, misclassifying attacked data as the target class becomes inevitable. To disrupt these implanted spurious correlations, we propose defending against backdoors from three perspectives.

Before proposing a backdoor-resistant SSL method, it is essential to understand the rationale behind the susceptibility of existing SSL methods to backdoor attacks. Deep neural networks (DNNs) are prone to learning coincidental feature associations formed between a subset of the input and target labels, which may be caused by factors such as data selection bias. These associations are referred to as spurious correlations\cite{asgari2022masktune}. In backdoor attacks, particularly clean-label variants, adversaries exploit this tendency by poisoning a small portion of training data, deliberately introducing spurious correlations between backdoor triggers and target labels while preserving the original labels.As shown in Figure \ref{causal_explanation}, SSL models capture two types of relationships: (1) legitimate causal effects between unlabeled data and their corresponding labels based on genuine feature patterns, and (2) artificial spurious correlations created by trigger patterns through pseudo-labeling. However, DNN models inherently favor learning simple, discriminative feature-category mappings\cite{zhang2023backdoor}, making them particularly susceptible to these injected spurious correlations especially during early training stages. Consequently, when these spurious correlations gradually overshadow the genuine causal relationships in the test phase, misclassifying attacked data as the target class becomes inevitable. To combat this vulnerability, we present a defense framework that addresses backdoor attacks from three distinct perspectives.

% (1) Filter the backdoor trigger during data pre-processing. (2) Substitute the pseudo labeling machanism to obstruct the implanted spurious correlations. (3) Dilute the influence of backdoors by associating the trigger with all classes.

From a data perspective, we first examine the characteristics of backdoor attacks against SSL \cite{shejwalkar2023perils, wang2022invisible}. Previous studies have revealed that successful backdoor triggers often resemble constant repetitive patterns similar to high-frequency noise signals, and such patterns should ideally span the entire image space to resist the frequently used data augmentation. This insight has motivated the adoption of a Gaussian Filter as a countermeasure for implanted backdoor triggers. It effectively smooths images by convolving them with a Gaussian function, thereby attenuating the impact of these noise-like trigger patterns during training while preserving the integrity of the original image structures \cite{gwosdek2012theoretical}.

From a label perspective, we aim to mitigate the correlation between the backdoor trigger and target class from a causal perspective. As mentioned previously, when the spurious correlation driven by the implanted trigger pattern overwhelms the causal effect driven by the feature pattern, adversarial manipulation becomes inevitable. To avoid this, we innovatively replace the simplistic one-to-one relationship between backdoor trigger and target class with a more complex one-to-all relationship. We argue that building a correlation to one specific label might be easy but excluding all other categories presents a substantial challenge. Therefore, we combine consistency regularization with complementary learning to substitute the supervised learning scheme\cite{ijcai2021p423}. It encourages models to identify which categories input data does not belong to, rather than predicting the category it does belong to. 

At last, to further dilute the the influence of backdoors during the training. We broaden the correlation between the backdoor trigger and corresponding target class to all categories. It's implemented through a simple mix-up strategy. As correlations with all classes effectively negate any specific correlation with a single class, this strategy serves as a mild way to supply the disruption of backdoors. By combining all these strategies, we significantly strengthens SSL model's resilience against backdoor attacks without sacrificing its clean data accuracy. Here, we summarize our main contributions as follows:
\begin{itemize}
    \item We conduct a detailed analysis of the rationale backdoor attacks for SSL and propose the first plug-in method for SSL that can counter these attacks. 
    % Plus, it is an attack-agnostic approach as it does not require any additional fine-tuning or detection steps like other defense methods.
    \item We provide a theoretical guarantee on the proposed complementary learning term to ensure that the classifier learned with complementary labels converges to the optimal one trained by traditional consistency loss.
    \item We evaluate our proposed method against a range of state-of-the-art backdoor attacks to confirm its backdoor robustness and performance on clean data.
\end{itemize}

\section{Backgroud}

In this paper, we basically follow the settings in \cite{shejwalkar2023perils} and concentrate on the backdoor attack and defense for SSL-based image classification systems. We begin by formalizing some notations, followed by the definition of adversary's objectives, capabilities and knowledge assumptions.

\noindent{\textbf{Problem Formulation:}}
In SSL, the training set is composed of both labeled and unlabeled data. Let $\mathcal{D}_l = {(x^i_l, y^i_l) : i \in [n]}$ represent the labeled dataset and $\mathcal{D}_u = {x^i_u : i \in [m]}$ denote the unlabeled dataset, where $n$ and $m$ are the quantities of labeled and unlabeled data. We follow the attack settings in previous works\cite{yan2021dehib} which assume a set of backdoor examples has been pre-generated by the attacker and successfully injected into the training dataset. Specifically, within the unlabeled set $\mathcal{D}_u$, there exists both a clean subset $\mathcal{D}_u^{cl} = {x^i_u : i \in [m^{cl}]}$ and a poisoned subset $\mathcal{D}_u^p = {x^i_u : i \in [m^p]}$, satisfying $m^{cl} + m^p = m$. In each training iteration, we sample batches $\mathcal{B}_l$ and $\mathcal{B}_u$ from the labeled dataset $\mathcal{D}_l$ and the unlabeled dataset $\mathcal{D}_u$, respectively, to serve as the training data. In the following sections, we define the classifier $f$ as: $\hat{y}=f(x)=\arg\max_{i\in[c]} g_i(x)$, where $\mathbf{g}:\mathcal{X} \to \mathbb{R}^c$ and $g_i(x)$ is the estimate of $P(y=i|x)$. Additionally, we denote $\pi_{k}=P(y=k)$ and $\bar{\pi}_{k}=P(y\neq k)$ as the prior of data belong and not belong to class $k$.

\noindent\textbf{Adversary's and Defender's Objectives:} The objective of a backdoor adversary is to install a backdoor function into the victim’s model. For an input image $x$ with the true label $ y^{\ast} $, the adversary's goal is to let the backdoored model output an desired target label $ y^t$ when the input $x$ is modified with a pre-specified backdoor trigger $T$, denoted as $x^t$. 

While defender's goal is to train a backdoor free classifier $f$ that output $f(x^t)=y^*$ using the aforementioned datasets $\mathcal{D}_l$ and $\mathcal{D}_u$, aiming for performance comparable to models trained on entirely clean data. 

\noindent\textbf{Adversary's Knowledge and Capabilities:} As discussed in Section \ref{intro}, we consider a situation where adversaries have precise knowledge of the classification task and access to the unlabeled training data. However, they only poison the unlabeled data used in the SSL pipeline, without having access to the labeled dataset or model itself. We make this assumption by presuming that, in SSL, the scarce labeled data is typically under careful selection and inspection.

\section{Method}
\subsection{Trigger Filtering}

As demonstrated and proved by the previous literature \cite{shejwalkar2023perils}, unlike attacks for supervised learning, successful backdoor attack triggers in SSL should adhere to several key principles: (1) Backdoor attacks should employ a clean-label style: for poisoned data $(x^p, y^*)$, $y^*=y^t$. (2) The backdoor trigger should span the entire image: the size of trigger $T$ should be similar to the size of input image $x$, e.g. $H \times W$ where $H$ and $W$ represent the height and width of the image (3) The backdoor trigger should be resistant to noise and its pattern should be repetitive: $f(\omega(x^t))=f(\Omega(x^t))$ where $\omega(\cdot)$ and $\Omega(\cdot)$ respectively denote weak and strong data augmentations. In addition to the backdoor attack strategies outlined by Shejwalkar et al, we find certain attacks designed for self-supervised learning, as detailed by \cite{wang2022invisible}, also prove effective in SSL contexts. 

In this section, we visualize two most successful backdoor triggers in SSL, labeled as 'Mosaic' and 'Freq'. As depicted in Figure \ref{gaussian}, the characteristics of these backdoor triggers closely resemble certain high-frequency noises (such as salt and pepper noise or line drop) in image processing that display sudden changes in local pixel values. Compared to the Mosaic trigger, the Freq trigger is less visible, especially in highlight background images, as detailed in Figure \ref{poison_vis_all} Appendix\ref{setup}). Traditional backdoor triggers, such as small white squares\cite{gu2017identifying}, pasted image parts\cite{saha2020hidden}, or adversarial patterns\cite{yan2021dehib}, can be easily filtered out by various data augmentation methods widely employed in SSL. In contrast, these backdoor attacks\cite{shejwalkar2023perils, wang2022invisible} designed for SSL are more noise-resistant and harder to detect and filter out in both training and testing phase. 

To address this issue, we propose adding a Gaussian Filter into the image pre-processing stage. As shown in Figure \ref{gaussian}, it successfully purifies the backdoor trigger pattern in the poisoned data without influencing the original data pattern by convolving local pixels with a Gaussian function $G_{i,j} = \frac{1}{2\pi \gamma ^ 2}exp({-\frac{(i - 3\gamma)^2 + (j - 3\gamma)^2}{2 \gamma ^ 2}})$, where $i,j$ are the 2D coordinate of the image and $\gamma$ is the hyper-parameter that determines both the standard deviation of Gaussian function and its kernel radius. 

\begin{figure}[t]
\centering
\includegraphics[width=0.9\columnwidth]{./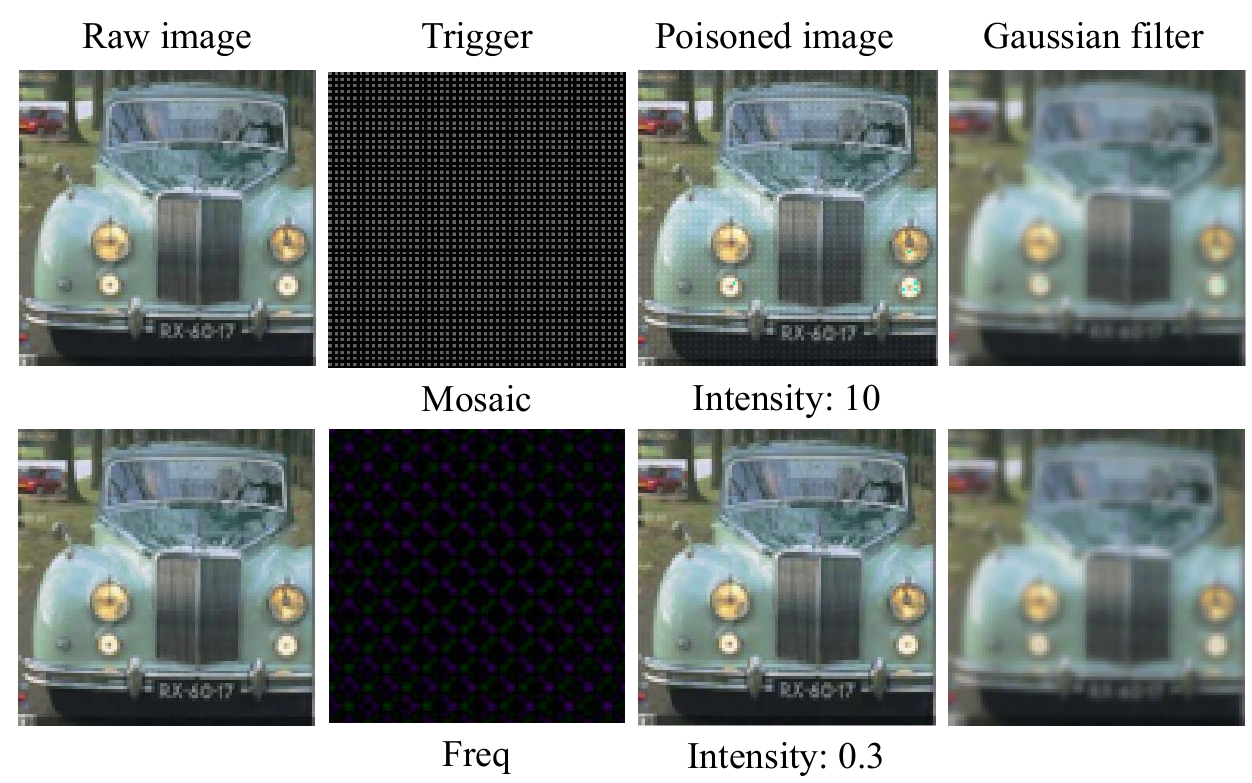}
\caption{Visualization of two successful backdoor triggers (including Gaussian Filter) under different attack intensity \cite{shejwalkar2023perils, wang2022invisible}. For enhanced visualization, the trigger patterns in the second row are displayed with a $10\times$ intensity amplification.}
\label{gaussian}
\end{figure}

\subsection{Backdoor Obstruction}
In addition to data-related perspectives, we aim to prevent the formation of spurious correlations between the trigger and the target label. As shown in Figure \ref{causal_explanation}, backdoors in SSL are introduced in a way akin to supervised learning. The adversaries presuppose that the trigger pattern in poisoned data offers a more straightforward route to the target label compared to natural feature patterns. In other words, when the model can easily capture the relationship between the feature pattern and the target class, the artificial linkage between backdoor triggers and the target label can be substantially weakened. As highlighted by \citet{shejwalkar2023perils}, the success rate of backdoor attacks in SSL tends to increase sharply within the first 5000 iterations. We also reveal that there might exist potential contentions between the learning of trigger patterns and natural feature patterns, especially in the early training stage, as detailed in Appendix \ref{contention} (Figure \ref{hist}). These insights and observations all underscore the critical importance of the early stages of training in both the backdoor implantation and its defense. The key to prevent the network from modeling spurious correlations between backdoor triggers and target classes is to substitute consistency loss, especially in the early stages of training. However, consistency loss, denoted as Eq.~\ref{consistency} plays a critical role in label propagation, making it irreplaceable in SSL.
\begin{equation} \label{consistency}
    \mathcal{L}_{con} = \sum ^ {|\mathcal{B}_u|} _{i=1}  \mathbb{I}(g_{\hat{y}}(\omega(x_u^i))\ge \tau)\ell(g(\Omega(x_u^i)), f(x))
\end{equation}

Fortunately, insights from some studies in learning from complementary labels \cite{ishida2017learning, xu2020generative, feng2020learning, gao2023unbiased} suggest an alternative approach that both facilitates label propagation and obstructs the direct correlation between the trigger and the target label: complementary learning encourage models to focus on identifying which classes the data \textit{does not} belong to, rather than focusing solely on what it \textit{does} belong to. Our intuition behind is also straightforward: although building a spurious correlation between trigger and a specific target class is simple, establishing multiple correlations to exclude all other categories presents a considerable challenge. 

Following the techniques used by \cite{yu2018learning,gao2024complementary}, we replace the consistency loss term $\mathcal{L}_{\text{con}}$ with the complementary loss $\mathcal{L}_{\text{com}}$ in Eq.~\ref{complementary}, where $\bar{y}_u$ is the estimated complementary label (denoting the classes that data \textit{does not} belong to) from $\omega(x_u)$ and $\bar{\ell}(f(x), \bar{y}_u) = \ell(\mathbf{Q}^\top g(x), \bar{y}_u)$ is the modified loss function for complementary learning. Here, $\mathbf{Q}$ represents the transition matrix that converts the predicted probability $P(y=i|x)$ to $P(y \neq j|x)$ according to the formula $P(\bar{y}=j|x) = \sum_{i \neq j} P(\bar{y}=j|y=i) P(y=i|x)$ which is derived from the definition of conditional probability. We summarize all the conditional probabilities between different classes as $Q_{ij}=P(y \neq j|y=i)$ into a transition matrix $\mathbf{Q}\in \mathbb{R}^{c \times c}$ and $Q_{ij}$ denotes the entry value in the $i$-th row and $j$-th column of transition matrix $\mathbf{Q}$.
\begin{equation} \label{complementary}
    \mathcal{L}_{com} = \sum^{|\mathcal{B}_u|}_{i=1} \bar{\ell}(g(\Omega(x_u^i)), \bar{y}_u)
\end{equation}

Inspired by the pseudo labeling strategy used in \cite{li2024instant}, we also generate the pseudo complementary labels based on the model predictions $g(x)$ and the learning effect as indicated by the number of data instances whose predictions on weakly augmented data align with those on strongly augmented data. The complementary label $\bar{y}_j^i = 1$ is generated (sampled) with the probability $(1-g_j(x^i))\cdot\sigma_t$ where $\sigma_t = \frac{1}{m} \sum_{k=1}^m \mathbb{I}(f(\omega(x^k)) = f(\Omega(x^k)))$ is the alignment ratio of current model. These approaches ensure that we adopt a conservative pseudo-labeling strategy in the early stages when the model has not yet acquired sufficient knowledge. Additionally, we take a moving average strategy to estimate the transition matrix as $\mathbf{Q}_t = \frac{1}{t} \hat{\mathbf{Q}} + \frac{t-1}{t} \mathbf{Q}_{t-1}$, where $\hat{\mathbf{Q}}$ is the estimated transition matrix by averaging the conditional probabilities $P(\bar{y}=j|x, y=i)$ on the current available batch of data $x$ in class $i$. Due to the limited space, we provide detailed pytorch-like algorithm description of complementary label generation and transition matrix estimation in Algorithm 1 and Algorithm 2 (Appendix \ref{appendix: alg}).

\subsection{Backdoor Dilution}
During the experiments, we observed that the backdoor filtering and obstruction strategies effectively defend against existing backdoor attacks, reducing the attack success rate from 90\% to 1\%. However, these strategies also reduce the model's accuracy on clean data, for reasons that are straightforward. The Gaussian Filter used in the backdoor filtering process tends to blur the input images, while the complementary learning approach used in backdoor obstruction requires more training iterations to achieve results comparable to those of normal supervised learning. To improve the model's performance without increasing the risk of attacks, we implemented a simple data mix-up strategy on the unlabeled data and their candidate labels. As demonstrated by Figure \ref{mix}, data mix-up does not compromise the trigger pattern in the poisoned image. We intentionally associated the trigger pattern with the label of a mixed class (horse), in addition to the original target class (bird). By distributing such trigger patterns across images of all classes during the training stage, we can effectively neutralize that specific association between the backdoor trigger and a single target class, thereby weakening the spurious correlation that a backdoor trigger could otherwise establish. 

Specifically, we divide the training process into two stages. In the first stage, we employ a supervised loss on the labeled data and a complementary loss on the unlabeled data, as described in Eq.\ref{loss1}. This approach ensures that the model focuses on capturing the feature patterns rather than the trigger patterns during the initial training phase.
\begin{equation} \label{loss1}
    \mathcal{L}=\sum^{|\mathcal{B}_l|}_{i=1} {\ell}(g(\omega(x_l^i)), {y}_l^i) + \sum^{|\mathcal{B}_u|}_{i=1} \bar{\ell}(g(\Omega(x_u^i)), \bar{y}_u)
\end{equation}

\vspace{-4pt}
In the second stage, we implement a data mix-up between the unlabeled data predicted with high confidence and the labeled data. Unlike traditional mix-up techniques that sample the mixing coefficient $\lambda$ from a Beta distribution \cite{zhang2017mixup}, we let the proportion of clean labeled data is greater than that of potentially poisoned unlabeled data. It ensures that the trigger pattern becomes more associated with the mixed class rather than the target class. We achieve this by defining $\lambda^{'}$ as $\lambda^{'}=\max(\lambda,1-\lambda)$.
\begin{equation} \label{eq_mix}
\Tilde{x}^j = \lambda^{'} x^i_l + (1-\lambda^{'})x^j_u, \quad \Tilde{y}^j = \lambda^{'} y^i_l + (1-\lambda^{'})f(x^j_u)
\end{equation}
We employ a combination of loss on the mixed data and consistency loss as the loss function, detailed in Eq.\ref{loss2}, where $\mathbb{T}(\cdot)$ is the threshold function that determines which unlabeled data are included in the training:
\begin{equation} \label{loss2}
\begin{aligned}
\mathcal{L} = & \sum^{|\mathcal{B}_l|}_{i=1} \ell(g(\omega(x_l^i)), y_l^i) + \alpha \sum^{|\mathcal{B}_u|}_{i=1} \mathbb{T}(x^i_u) \ell(g(\Tilde{x}^i), \Tilde{y}^i) \\
& + (1-\alpha) \sum^{|\mathcal{B}_u|}_{i=1} \mathbb{T}(x^i_u) \ell(g(\Omega(x_u^i)), f(x^i_u))
\end{aligned}
\end{equation}
For a complete training procedure, please refer to the Algorithm 3 in Appendix \ref{appendix: alg}.

\begin{figure}[t]
\centering
\includegraphics[width=0.98\columnwidth]{./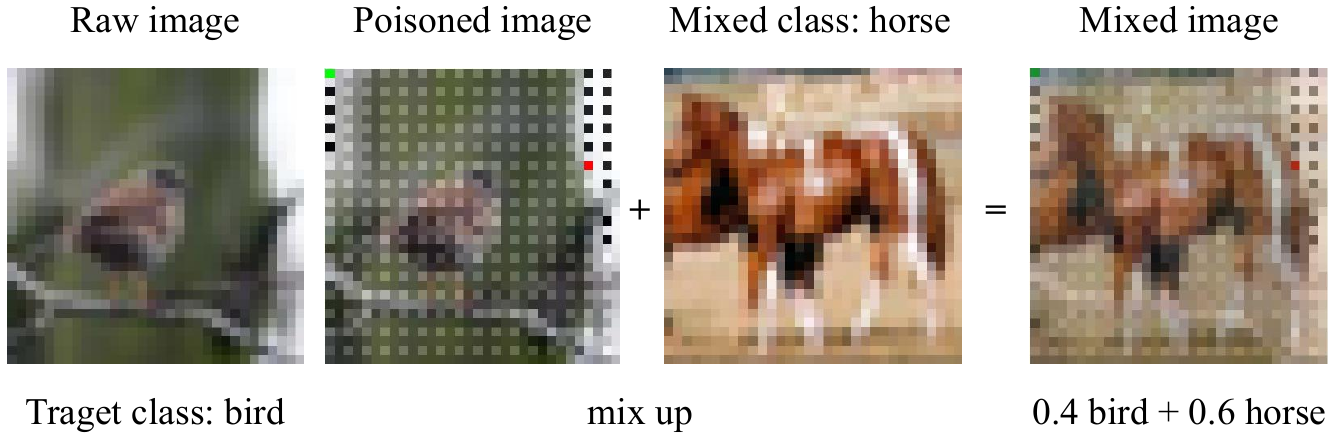}
\caption{The data mix-up does not compromise the trigger pattern, such that the trigger pattern becomes more associated with the class "horse" rather than target class "bird". Similar phenomenon also exists in many other backdoor attack triggers.}
\label{mix}
\end{figure}

\section{Theoretical Analysis}
After proposing the backdoor defense strategy, we provide a theoretical analysis of substituting the traditional consistency loss with our proposed complementary loss in the aspect of generalization. We demonstrate that, under reasonable assumptions, optimizing this new loss term in Eq.\ref{loss1} can achieve the same optimal classifier as would be obtained by minimizing the original consistency loss in \cite{sohn2020fixmatch}. Moreover, we further provide an upper bound for the estimation error of our method. Before presenting the main results, we first define the true risk associated with the classification model as $R(f)=\mathbb{E}_{(x,y)} \left[ \ell(f(x),y) \right]$ and the risk with respect to complementary labels as $\bar{R}(f)=\mathbb{E}_{(x,\bar{y})}\left[\bar{\ell}(f(x),\bar{y})\right]$. In the proposed method, we encourage the model to conduct complementary learning on unlabeled data. Then we define the risk on all training data as $\Tilde{R}(f)= {R}_l(f)+ {R}_u(f) = \mathbb{E}_{(x_l,y_l)} \left[ \ell(f(x_l),y_l) \right]+\mathbb{E}_{(x_u,\bar{y})}\left[\bar{\ell}(f(x_u),\bar{y})\right]$. Our objective is to learn an effective classification model by minimizing the empirical risk $\hat{R}(f)=\hat{R}_l(f)+\hat{R}_u(f)$. It is important to note that during training, since the labels of unlabeled data are inaccessible, we train the model with $\hat{R}^{'}_u(f)$ instead of $\hat{R}_{u}(f)$ using the output pseudo label $\hat{\bar{y}}_u$. We first demonstrate that the transition from consistency loss to complementary loss ensures the identity of the optimal classifier, given a reasonable assumption:
\begin{assumption} \label{identity}
    By minimizing the expected risk $R(f)$ on the training data, including both $R_l(f)$ and $R_u(f)$, the optimal mapping $\mathbf{g}^*$ satisfies $g_i^*(x) = P(y=i|x), \forall i \in [c]$.
\end{assumption}

\begin{theorem}
    Suppose that transition matrix $\mathbf{Q}$ is invertible and Assumption \ref{identity} is satisfied, the minimizer $\bar{f}^*$ of $\bar{R}(f)$ coincides with the minimizer $f^*$ of $R(f)$, i.e., $\bar{f}^*=f^*$.
\end{theorem}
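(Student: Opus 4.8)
The plan is to exploit the defining relation between the complementary‑label posterior and the ordinary class posterior, $P(\bar y=j\mid x)=\sum_{i}Q_{ij}\,P(y=i\mid x)$, which in vector form reads $\bar\eta(x)=\mathbf Q^\top\eta(x)$, where $\eta(x)=(P(y=1\mid x),\dots,P(y=c\mid x))^\top$ and $\bar\eta(x)=(P(\bar y=1\mid x),\dots,P(\bar y=c\mid x))^\top$. The key observation is that, by the very definition $\bar\ell(f(x),\bar y)=\ell(\mathbf Q^\top\mathbf g(x),\bar y)$, the complementary risk $\bar R(f)=\mathbb E_{(x,\bar y)}[\ell(\mathbf Q^\top\mathbf g(x),\bar y)]$ is nothing but the ordinary $\ell$‑risk of the re‑parametrised predictor $\mathbf h:=\mathbf Q^\top\mathbf g$ evaluated against the target $\bar y$. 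So I would first rewrite $\bar R$ in this form and observe that, since $\eta(x)$ is itself a legitimate conditional distribution (and, by Assumption~\ref{identity}, lies in the attainable set of $\mathbf g$), the choice $\mathbf g=\eta$ yields $\mathbf h=\bar\eta$; hence $\bar\eta$ is attainable for $\mathbf h$.

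Second, I would invoke the loss‑calibration content of Assumption~\ref{identity}: the population minimiser of an $\ell$‑risk recovers the conditional target distribution. This is a property of $\ell$ alone and is insensitive to what the target symbol denotes, so it applies verbatim with $y$ replaced by $\bar y$. Consequently the unconstrained minimiser $\mathbf h^*$ of $\mathbb E_{(x,\bar y)}[\ell(\mathbf h(x),\bar y)]$ satisfies $\mathbf h^*(x)=\bar\eta(x)$ for (almost) every $x$. Because $\bar\eta$ is attainable by some $\mathbf g$, minimising over the restricted parametrisation $\mathbf h=\mathbf Q^\top\mathbf g$ attains the same optimum, and at the optimum $\mathbf Q^\top\mathbf g^{\bar *}(x)=\bar\eta(x)=\mathbf Q^\top\eta(x)$.

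Third, invertibility of $\mathbf Q$ (hence of $\mathbf Q^\top$) lets me cancel the transition matrix: $\mathbf g^{\bar *}(x)=\eta(x)=(P(y=i\mid x))_{i}$ for every $x$. Applying Assumption~\ref{identity} once more, the minimiser $\mathbf g^*$ of $R(f)$ obeys $g_i^*(x)=P(y=i\mid x)$ as well, so $\mathbf g^{\bar *}=\mathbf g^*$ pointwise; since the classifier is read off by $f(x)=\arg\max_i g_i(x)$, this gives $\bar f^*=f^*$.

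The step I expect to be the main obstacle is making the second step rigorous. I must argue carefully that minimising $\bar R$ over the (possibly proper) subset $\{\mathbf Q^\top\mathbf g:\mathbf g\text{ a valid conditional distribution}\}$ still lands on the unconstrained optimum $\bar\eta$, and — more delicately — that this optimum pins down $\mathbf g^{\bar *}$ \emph{uniquely} as $\eta$, rather than merely identifying $\mathbf Q^\top\mathbf g^{\bar *}$ up to the kernel of $\mathbf Q^\top$. Uniqueness is precisely what Assumption~\ref{identity} supplies (it posits that the optimal mapping equals the posterior, not just that it is risk‑minimising), and invertibility of $\mathbf Q$ is exactly what propagates that uniqueness back through the linear map; I would spell out this two‑sided use of the hypotheses explicitly, and, in case $\ell$ is only taken to be a generic classification‑calibrated loss, I would flag where strict properness of $\ell$ is implicitly being relied upon to turn "risk‑minimising" into "equal to the posterior".
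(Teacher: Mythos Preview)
Your proposal is correct and follows essentially the same route as the paper: both arguments use the identity $\bar\eta(x)=\mathbf Q^\top\eta(x)$, invoke Assumption~\ref{identity} on the $\bar y$-risk to conclude that the optimal predictor satisfies $\mathbf Q^\top\mathbf g^{\bar *}=\bar\eta$, and then cancel $\mathbf Q^\top$ by invertibility to obtain $\mathbf g^{\bar *}=\eta=\mathbf g^*$. Your write-up is in fact more careful than the paper's, which simply asserts the analogue of Assumption~\ref{identity} for the complementary-label risk without spelling out the reparametrisation $\mathbf h=\mathbf Q^\top\mathbf g$ or the attainability and uniqueness issues you flag.
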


For most loss functions like cross entropy, Assumption \ref{identity} can be provably satisfied\cite{yu2018learning}. Once recognizing the identifiability of the optimal classifier derived from complementary loss and consistency loss, we further provide generalization analysis on our proposed method which implies that the classifier $\hat{f}^{'}$ derived by the proposed method converges to the optimal classifier $f^*$.

\vspace{-5.5pt}
\begin{theorem} \label{bound}
    Suppose $\bar{\pi}_k$ and $\pi_k$ are given. Let the loss function $\ell(\cdot)$ on labeled and loss function $\bar{\ell}(\cdot)$ on unlabeled data be upper bounded respectively by $M_1$ and $M_2$. For some $\epsilon>0$, if $\sum^m_{i=1}\sum^c_{k=1}|\hat{y}^{ik}_u-y^{ik}_u|/m\leq \epsilon$. Then, for any $\delta>0$, with the probability $1-c\delta$:
    \begin{equation}
    \begin{aligned}
        &\Tilde{R}(\hat{f}^{'})- \Tilde{R}(f^*) \leq \sum_{k=1}^c \Bigg( 4c \pi_k \mathfrak{R}_{n_{k}}(\mathcal{H}) + 4c \bar{\pi}_k \mathfrak{R}_{m_{k}}(\mathcal{H})  \\
        &+ 2\pi_k M_1\sqrt{\frac{\log{1/\delta}}{2n_{k}}} + 2\bar{\pi}_k M_2\sqrt{\frac{\log{1/\delta}}{2m_{k}}} \Bigg) + 2M_2\epsilon,
    \end{aligned}
    \end{equation}
    where $y^{i}_u$ represents the true label of unlabeled data $x_u^i$ and $\hat{y}^{i}_u$ is the estimated pseudo label; $n_k$ represents the the numbers of labeled data whose labels are $y=k$ and $m_k$ represents the the numbers of unlabeled data whose complementary labels are $\bar{y}=k$. $\mathfrak{R}_{n}(\mathcal{H})=\mathbb{E}\left[ \sup_{h_k(x) }\frac{1}{n}\sum_{j=1}^{n}\sigma_j h_k(x)\right]$is the Rademacher complexity and $\{\sigma_1,\cdots,\sigma_n\}$ are Rademacher variables uniformly distributed from $\{-1,1\}$.
\end{theorem}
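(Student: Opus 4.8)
\emph{Proof plan.} The plan is to run the standard three-term excess-risk decomposition for empirical risk minimization, modified to accommodate three features of our setting: the two-source risk $\tilde R = R_l + R_u$, the use of pseudo complementary labels $\hat{\bar y}_u$ in place of the true complementary labels, and a class-conditional split that brings out the priors $\pi_k,\bar\pi_k$ together with the per-class sample sizes $n_k,m_k$.

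First I would use that $\hat f'$ minimizes the pseudo-label empirical risk $\hat R'(f)=\hat R_l(f)+\hat R'_u(f)$ to write
\[
\tilde R(\hat f')-\tilde R(f^*)\le \big[\tilde R(\hat f')-\hat R'(\hat f')\big]+\big[\hat R'(f^*)-\tilde R(f^*)\big]\le 2\sup_{f}\big|\tilde R(f)-\hat R'(f)\big|,
\]
since the optimization gap $\hat R'(\hat f')-\hat R'(f^*)$ is nonpositive. I would then isolate the pseudo-labeling error by inserting the true-complementary-label empirical risk $\hat R_u$:
\[
\big|\tilde R(f)-\hat R'(f)\big|\le \big|R_l(f)-\hat R_l(f)\big|+\big|R_u(f)-\hat R_u(f)\big|+\big|\hat R_u(f)-\hat R'_u(f)\big|.
\]
Because $\bar\ell(f(x),\bar y)=\ell(\mathbf Q^\top g(x),\bar y)$ is affine in its label argument and bounded by $M_2$, the last term is controlled deterministically by $\tfrac{M_2}{m}\sum_{i,k}|\hat y_u^{ik}-y_u^{ik}|\le M_2\epsilon$; carried through the factor $2$ above this produces the additive $2M_2\epsilon$.

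Next I would bound the two genuine uniform deviations class by class. Writing $R_l(f)=\sum_k \pi_k\,\mathbb E[\ell(f(x),k)\mid y=k]$ and $\hat R_l(f)=\sum_k (n_k/n)\hat R_{l,k}(f)$ (and analogously $R_u,\hat R_u$ with $\bar\pi_k$, $m_k$, and the complementary loss), I would, conditionally on the counts $n_k,m_k$, apply symmetrization to each class-$k$ term, $\mathbb E\sup_f|\hat R_{l,k}(f)-R_{l,k}(f)|\le 2\,\mathfrak R_{n_k}(\ell\circ\mathcal H)$, and then use a multi-class contraction over the $c$ output coordinates to replace $\mathfrak R_{n_k}(\ell\circ\mathcal H)$ by $c\,\mathfrak R_{n_k}(\mathcal H)$ (similarly $c\,\mathfrak R_{m_k}(\mathcal H)$ for the complementary side). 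Summing over $k$ with weights $\pi_k$ (resp. $\bar\pi_k$) and combining with the earlier doubling yields the $4c\pi_k\mathfrak R_{n_k}(\mathcal H)$ and $4c\bar\pi_k\mathfrak R_{m_k}(\mathcal H)$ terms. To pass from expectations to high probability I would apply McDiarmid's bounded-difference inequality to each class-$k$ deviation — changing one class-$k$ labeled point moves $\hat R_{l,k}$ by at most $M_1/n_k$, giving a deviation of order $M_1\sqrt{\log(1/\delta)/(2n_k)}$ with probability $1-\delta$, and likewise $M_2\sqrt{\log(1/\delta)/(2m_k)}$ for the complementary side — and then take a union bound over the $c$ classes to get the stated confidence $1-c\delta$. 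Collecting the Rademacher terms, the concentration terms (again doubled), and the deterministic $2M_2\epsilon$ reproduces the claimed inequality; invoking the preceding theorem (so that $f^*$ is also the target reachable by the complementary objective) gives the conclusion that $\hat f'$ converges to $f^*$.

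I expect the main obstacle to be the multi-class bookkeeping: getting the contraction step and the constant $4c$ exactly right while handling the randomness of $n_k,m_k$ (cleanest by conditioning on them and observing that every bound holds for each realization), and verifying that the $\epsilon$-assumption, which is stated for ordinary pseudo-labels, genuinely controls the complementary-loss discrepancy through the linearity of $\bar\ell$ in its label argument and the structure of $\mathbf Q$.
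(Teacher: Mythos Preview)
Your proposal is correct and mirrors the paper's proof almost exactly: the same telescoping decomposition that collapses to a uniform-deviation term plus a pseudo-label error term, the same class-conditional split weighted by $\pi_k,\bar\pi_k$, the same symmetrization-plus-contraction argument (the paper packages the contraction step as two lemmas invoking Talagrand to get $\mathfrak R_{n_k}(\ell\circ\mathcal F)\le c\,\mathfrak R_{n_k}(\mathcal H)$), and McDiarmid/bounded-differences for the $\sqrt{\log(1/\delta)/(2n_k)}$ terms with a union bound over classes. Your affinity argument for the $M_2\epsilon$ term is a slightly cleaner variant of the paper's Lemma~3, which instead splits $\epsilon$ into an ``incorrect-label'' part $\epsilon_1$ and a ``missing-label'' part $\epsilon_2$ and bounds each side separately; and your caution about conditioning on $n_k,m_k$ is well placed, since the paper handles this implicitly through the assumption that $\pi_k,\bar\pi_k$ are given.
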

\vspace{-5pt}

Theorem \ref{bound} illustrates the generalization error bound of our proposed method using Eq.\ref{loss1} as the loss function. As $m_k$ and $n_k$ approach infinity and $\epsilon$ approaches zero, the empirical risk minimizer trained converges to the true risk minimizer with a high probability. These results provide a theoretical guarantee for substituting the consistency loss with the complementary loss in the first stage of training. We leave the detailed proof in the Appendix \ref{appdendix: proof}.

\begin{table*}[htb] 
\caption{The attack success rate (\textit{ASR}\%)  and the clean accuracy (\textit{CA}\%) our proposed BI against 5 representative backdoor attacks (More results on other SSL methods are provided in Table \ref{tab:more_results}, Appendix \ref{appendix: other ssl methods}). Best \textit{ASR} \% is highlighted in \textbf{bold}. } \label{tab:main_results}
\centering
\begin{subtable}{\textwidth}
\centering
\small
\begin{tabular} {|c|c|c|c|c|c|c|c|c|c|c|c|c|c|c|}
  \hline
  \multirow{6}{*}{\rotatebox{90}{\textbf{CIFAR10}}} & \multirow{2}{*}{Algorithm}  & \multicolumn{2}{c|}{CL-Badnets} & \multicolumn{2}{c|}{Narcissus} & \multicolumn{2}{c|}{DeHiB*} & \multicolumn{2}{c|}{Mosaic} & \multicolumn{2}{c|}{Freq}\\ \cline{3-12}
  
  && CA $\uparrow$ & ASR $\downarrow$ & CA $\uparrow$ & ASR $\downarrow$ & CA $\uparrow$ & ASR $\downarrow$ & CA $\uparrow$ & ASR $\downarrow$ & CA $\uparrow$ & ASR $\downarrow$\\ \cline{2-12}
  % &Mixmatch & 93.4 & 16.8 & 93.8 & 2.2 & 93.2 & 22.0 & 94.2 & 96.8 & 93.4 & 83.7 \\ \cline{2-12}
  &Fixmatch & 93.9 & 13.4 & 94.2 & 1.3 & 94.0 & 35.8 & {94.2} & 93.8 & 94.8 & 90.2 \\ \cline{2-12}
  &Flexmatch & {94.2} & 12.4 & {94.9} & 1.1 & 94.2 & 16.9 & {94.3} & 90.1 & 95.0 & 93.4 \\ \cline{2-12}
  &Fixmatch w/ BI & 93.4 & $\textbf{1.4}_{-12.0}$ & 93.5 & $\textbf{0.0}_{-1.3}$ & 92.9 & $\textbf{0.1}_{-35.7}$ & 93.4 & $\textbf{2.5}_{-91.3}$ & 93.8 & ${0.7}_{-89.5}$ \\ \cline{2-12}
  &Flexmatch w/ BI & 92.5 & $2.5_{-10.9}$ & 93.1 & $\textbf{0.0}_{-1.1}$ & 93.4 & $1.1_{-15.8}$ & 93.0 & ${4.1}_{-86.0}$ & 93.0 & $\textbf{0.4}_{-93.0}$ \\\hline
\end{tabular}
\end{subtable}

\begin{subtable}{\textwidth}
\centering
\small
\begin{tabular} {|c|c|c|c|c|c|c|c|c|c|c|c|c|c|c|}
  \hline
  \multirow{6}{*}{\rotatebox{90}{\textbf{SVHN}}} & \multirow{2}{*}{Algorithm}  & \multicolumn{2}{c|}{CL-Badnets} & \multicolumn{2}{c|}{Narcissus} & \multicolumn{2}{c|}{DeHiB*} & \multicolumn{2}{c|}{Mosaic} & \multicolumn{2}{c|}{Freq}\\ \cline{3-12}
  
  && CA $\uparrow$ & ASR $\downarrow$ & CA $\uparrow$ & ASR $\downarrow$ & CA $\uparrow$ & ASR $\downarrow$ & CA $\uparrow$ & ASR $\downarrow$ & CA $\uparrow$ & ASR $\downarrow$\\ \cline{2-12}
  % &Mixmatch  & 93.5 & 5.0 & 93.2 & 0.0 & 94.2 & 2.5 & 92.9 & 87.7 & 93.3 & 90.3 \\ \cline{2-12}
  &Fixmatch  & {94.9} & 3.1 & 94.2 & 0.0 & 94.8 & 3.2 & 94.5 & 97.1 & 93.8 & 84.6 \\ \cline{2-12}
  &Flexmatch  & 88.9 & 1.2 & 86.1 & 0.0 & 86.8 & 2.2 & 83.9 & 50.1 & 94.9 & 86.4 \\ \cline{2-12}
  &Fixmatch w/ BI & 94.4 & $0.2_{-2.90}$ & {94.6} & $0.0_{-0.0}$ & {94.9} & $\textbf{0.4}_{-2.80}$ & {95.1} & ${0.5}_{-96.6}$ & 94.7 & $\textbf{1.2}_{-83.4}$ \\ \cline{2-12}
  &Flexmatch w/ BI & 93.9 & $\textbf{0.0}_{-1.20}$ & 94.1 & $0.0_{-0.0}$ & 94.4 & $0.5_{-1.70}$ & 94.3 & $\textbf{0.3}_{-49.8}$ & {95.0} & ${1.4}_{-85.0}$ \\\hline
\end{tabular}
\end{subtable}

\begin{subtable}{\textwidth}
\centering
\small
\begin{tabular} {|c|c|c|c|c|c|c|c|c|c|c|c|c|c|c||c|c|}
  \hline
  \multirow{6}{*}{\rotatebox{90}{\textbf{STL10}}} & \multirow{2}{*}{Algorithm} & \multicolumn{2}{c|}{CL-Badnets} & \multicolumn{2}{c|}{Narcissus} & \multicolumn{2}{c|}{DeHiB} & \multicolumn{2}{c|}{Mosaic} & \multicolumn{2}{c|}{Freq}\\ \cline{3-12}
  
  && CA $\uparrow$ & ASR $\downarrow$ & CA $\uparrow$ & ASR $\downarrow$ & CA $\uparrow$ & ASR $\downarrow$ & CA $\uparrow$ & ASR $\downarrow$ & CA $\uparrow$ & ASR $\downarrow$\\ \cline{2-12}
  
  % &Mixmatch & 90.5 & 0.0 & 90.3 & 11.6 & 89.6 & 2.0 & 88.8 & 1.1 & 88.9 & 87.5 &90.9 & 86.4\\ \cline{2-14}
  &Fixmatch  & {92.2} & 13.1 & {92.1} & \textbf{0.0} & {92.0} & 2.2 & 91.8 & 92.4 & 91.7 & 91.5\\ \cline{2-12}
  &Flexmatch & 88.1 & 6.5 & 88.4 & 0.9 & 87.8 & 1.7 & 87.8 & 49.8 & 90.9 & 75.8 \\ \cline{2-12}
  &Fixmatch w/ BI & 91.7 & $\textbf{0.0}_{-13.1}$ & 91.9 & $0.1_{+0.1}$ & 91.7 & $\textbf{0.4}_{-1.80}$ & 92.3 & $\textbf{1.2}_{-91.2}$ & 92.4 & $\textbf{0.2}_{-91.3}$ \\ \cline{2-12}
  &Flexmatch w/ BI & 91.4 & $3.4_{-3.10}$ & {92.1} & $0.1_{-0.8}$ & 91.4 & $0.5_{-1.20}$ & {93.1} & ${2.3}_{-47.5}$ & {92.8} & ${0.3}_{-75.5}$ \\\hline
\end{tabular}
\end{subtable}

\begin{subtable}{\textwidth}
\centering
\small
\begin{tabular} {|c|c|c|c|c|c|c|c|c|c|c|c|c|c|c|}
  \hline
  \multirow{6}{*}{\rotatebox{90}{\textbf{CIFAR100}}} & \multirow{2}{*}{Algorithm} & \multicolumn{2}{c|}{CL-Badnets} & \multicolumn{2}{c|}{Narcissus} & \multicolumn{2}{c|}{DeHiB*} & \multicolumn{2}{c|}{Mosaic} & \multicolumn{2}{c|}{Freq}\\ \cline{3-12}
  && CA $\uparrow$ & ASR $\downarrow$ & CA $\uparrow$ & ASR $\downarrow$ & CA $\uparrow$ & ASR $\downarrow$ & CA $\uparrow$ & ASR $\downarrow$ & CA $\uparrow$ & ASR $\downarrow$\\ \cline{2-12}
  % &Mixmatch & 64.6 & 0.0  &65.7 & 29.4 & 70.0 & 1.9 & 67.5 & 9.4 & 71.6 & 92.8 & 63.9 & 87.4 \\ \cline{2-14}
  &Fixmatch & 70.6 & 22.0 & 71.4 & 1.1 & 71.4 & 14.5 & 71.1 & 91.8 & 70.8 & 90.3 \\ \cline{2-12}
  &Flexmatch & 71.9 & 23.4 & {72.4} & 5.9 & {71.8} & 6.8 & {72.5} & 94.6 & {71.9} & 76.4  \\ \cline{2-12}
  &Fixmatch w/ BI & 70.6 & $\textbf{0.7}_{-21.3}$ & 70.9 & $1.5_{+0.4}$ & 71.0 & $\textbf{1.4}_{-13.1}$ & 65.4 & $\textbf{3.2}_{-88.6}$ & 67.6 & $\textbf{0.4}_{-89.9}$ \\ \cline{2-12}
  &Flexmatch w/ BI & {72.0} & $0.9_{-22.5}$ & 71.1 & $\textbf{0.1}_{-5.8}$ & 71.5 & ${2.5}_{-4.30}$ & 66.3 & ${5.7}_{-88.9}$ & 68.9 & ${0.7}_{-75.7}$ \\\hline
\end{tabular}
\end{subtable}
\end{table*}

\section{Experiment}

\begin{table*}[htb] 
\caption{Comparison between SOTA learning-algorithm-agnostic defenses and our proposed BI based on Fixmatch against two selected effective backdoor attacks (Mosaic and Freq). Best \textit{CA}\% and \textit{ASR}\% (excluding No defense) are highlighted in \textbf{bold}.} \label{tab:defense_results}
    \centering
    \begin{subtable}{\textwidth}
    \centering
    \small
    \begin{tabular} {|c|c|c|c|c|c|c|c|c|c|c|c|c|c|}
      \hline
      \multirow{6}{*}{\rotatebox{90}{\textbf{MOSAIC}}} & \multirow{2}{*}{Dataset} & \multicolumn{2}{c|}{No defense} & \multicolumn{2}{c|}{FT} & \multicolumn{2}{c|}{FP} & \multicolumn{2}{c|}{NAD} & \multicolumn{2}{c|}{ABL} & \multicolumn{2}{c|}{BI} \\ \cline{3-14}
      && CA $\uparrow$ & ASR $\downarrow$ & CA $\uparrow$ & ASR $\downarrow$ & CA $\uparrow$ & ASR $\downarrow$ & CA $\uparrow$  &ASR $\downarrow$ & CA $\uparrow$ & ASR $\downarrow$ & CA $\uparrow$ & ASR $\downarrow$ \\ \cline{2-14}
      &CIFAR10 & 94.2 & 93.8 & 90.7 & 86.5 & 91.5 & 80.6 & 86.5 & 59.8 & \textbf{94.4} & 92.6 & 93.4 & \textbf{2.5} \\ \cline{2-14}
      &SVHN & 94.5 & 97.1 & 93.4 & 95.2 & 95.1 & 98.1 & 82.3 & 92.1 & 94.0 & 97.1 & \textbf{95.1} & \textbf{0.5} \\ \cline{2-14}
      &STL10 & 91.8 & 92.4 & 86.7 & 90.3 & 87.8 & 84.6 & 74.5 & 91.8 & 90.9 & 89.5 & \textbf{92.0} & \textbf{1.2} \\ \cline{2-14}
      &CIFAR100 & 71.1 & 91.8 & 64.3 & 79.4 & 65.9 & 80.9 & 56.8 & 70.2 & \textbf{69.7} & 90.3 & 65.4 & \textbf{3.2} \\ \hline
    \end{tabular}
    \end{subtable}

    \begin{subtable}{\textwidth}
    \centering
    \small
    \begin{tabular} {|c|c|c|c|c|c|c|c|c|c|c|c|c|c|}
      \hline
      \multirow{6}{*}{\rotatebox{90}{\textbf{FREQ}}} & \multirow{2}{*}{Dataset} & \multicolumn{2}{c|}{No defense} & \multicolumn{2}{c|}{FT} & \multicolumn{2}{c|}{FP} & \multicolumn{2}{c|}{NAD} & \multicolumn{2}{c|}{ABL} & \multicolumn{2}{c|}{BI} \\ \cline{3-14}
      && CA $\uparrow$ & ASR $\downarrow$ & CA $\uparrow$ & ASR $\downarrow$ & CA $\uparrow$ & ASR $\downarrow$ & CA $\uparrow$  &ASR $\downarrow$ & CA $\uparrow$ & ASR $\downarrow$ & CA $\uparrow$ & ASR $\downarrow$ \\ \cline{2-14}
      &CIFAR10 & 94.8 & 90.2 & 90.4 & 77.2 & 91.5 & 79.4 & 83.0 & 44.9 & \bf {94.0} & 88.3 & 93.8 & \textbf{0.7} \\ \cline{2-14}
      &SVHN & 93.8 & 84.6 & 94.0 & 87.1 & 94.3 & 82.2 & 90.9 & 77.4 & \bf 95.1 & 92.8 & {94.7} & \textbf{1.2} \\ \cline{2-14}
      &STL10 & 91.7 & 91.5 & 86.7 & 90.3 & 87.8 & 84.6 & 74.5 & 91.8 & 90.9 & 89.5 & \bf{91.6} & \textbf{0.2} \\ \cline{2-14}
      &CIFAR100 & 71.1 & 91.8 & 64.9 & 84.2 & 62.1 & 77.3 & 59.1 & 82.3 & \textbf{68.2} & 89.4 & 67.6 & \textbf{0.4} \\ \hline
    \end{tabular}
    \end{subtable}
\end{table*}

\begin{table*}[ht] 
\caption{Ablation study on "Mosaic" attack. Best \textit{CA}\% and \textit{ASR}\% (excluding Fixmatch) are highlighted in \textbf{bold}.}
\centering
\small
{
    \resizebox{0.95\textwidth}{!}
    {
        \begin{tabular}{|c|c|c|c|c|c|c|c|c|c|c|c|}
        \hline
        \multicolumn{4}{|c|}{Dataset} & \multicolumn{2}{c|}{CIFAR10} & \multicolumn{2}{c|}{CIFAR100} & \multicolumn{2}{c|}{SVHN} & \multicolumn{2}{c|}{STL10} \\ \hline
        Fixmatch & Gaussian Filter & $\mathcal{L}_{comp}$ & trigger mixup & CA $\uparrow$ & ASR$\downarrow$ & CA $\uparrow$ & ASR$\downarrow$ & CA $\uparrow$ & ASR$\downarrow$ & CA $\uparrow$ &ASR$\downarrow$\\ \hline 
        \checkmark &&& &{94.2}&93.8&{71.1}&91.8&94.5&92.4&91.8&97.1 \\ \hline
        \checkmark &\checkmark&& &92.6&8.9&59.4&11.2&93.8&1.1&\textbf{92.6}&4.5 \\\hline
        \checkmark &&\checkmark&&86.4&1.2&53.1&\textbf{0.2}&90.5&0.4&85.5&0.6 \\\hline
        \checkmark &&&\checkmark& \textbf{93.9}&26.7&\textbf{70.4}&53.8&\textbf{95.1}&12.0&91.8&28.9 \\\hline
        \checkmark &\checkmark &&\checkmark& 93.1 & 5.7 & 65.9 & 12.4 & 94.8 & 6.5 & 91.7 & 4.3 \\\hline
        \checkmark &\checkmark&\checkmark& & 84.1 & \textbf{0.4} & 51.3 & 0.3 & 85.4 & \textbf{0.0} & 85.3 & \textbf{0.1} \\\hline
        \checkmark &\checkmark&\checkmark& \checkmark &93.4&2.5&65.4&3.2&\textbf{95.1}&0.5&{92.3}&1.2\\ \hline
        \end{tabular}
    }
}
\label{tab: ablation}
\end{table*}

\subsection{Experimental Setup}
\noindent{\textbf{Datasets and Implementations.}}
To assess the performance and efficacy of our proposed backdoor defense method, we conduct experiments on four widely recognized datasets: CIFAR10, SVHN, CIFAR100, and STL10. Following prior research\cite{shejwalkar2023perils}, we vary the amounts of labeled data and backbone models (WideResnet with different width) across different datasets: 4000 for CIFAR10, 100 for SVHN, 1000 for STL10 and 2500 for CIFAR100. To ensure a fair comparison, we adhere to the experimental setup described in \cite{shejwalkar2023perils}, which involves poisoning 0.2\% of the entire dataset while maintaining the same attack intensity. We also incorporate some of their original results for comparison. Comprehensive details on the implementation of backdoor attack triggers are provided in Appendix \ref{setup}.

% Comprehensive details on the implementation of backdoor attack triggers and additional experimental results, including the defense effect of our proposed method for different target classes, results under varying numbers of labeled data, and numbers of poisoned data, are provided in Appendix \ref{setup}.

\noindent{\textbf{Attack and Defense Baselines.}}
To evaluate defense effects against backdoor threats, we test five representative strategies. Specifically, we chose CL-Badnets\cite{gu2017identifying}, Narcissus\cite{chen2023practical}, DeHiB\cite{yan2021dehib}, Mosaic\cite{shejwalkar2023perils} and Freq\cite{wang2022invisible} for validation (details are left in Appendix\ref{appdendix: attack setting}). DeHiB* denotes the original results from \cite{yan2021dehib} which had access to labeled data, while DeHiB refers to the results reproduced by \cite{shejwalkar2023perils} without access to labeled data. This discrepancy arises due to the lack of detailed implementation information for DeHiB on SVHN and STL10 datasets, which led to our inability to replicate the results reported in the original study. For backdoor defense, we compare the proposed Backdoor Inhibitor (BI) with four existing methods: Fine-tuning (FT), Fine-pruning (FP)\cite{liu2018fine}, Neural Attention Distillation (NAD)\cite{li2021neural}, and Anti-Backdoor Learning (ABL)\cite{li2021anti}. Given the scarcity of specialized backdoor defense methods for SSL, we utilize different types of defense strategies from supervised learning (image classification) as baseline comparisons.

\noindent{\textbf{Evaluation Metrics.}}
In this study, we use two main performance metrics (\textit{CA \& ASR}) as follows: (1) \textit{Clean accuracy (CA)} measures the accuracy of a model on clean test data without any backdoor trigger \textit{T}. It is vital for backdoored models to maintain high \textit{CA} to ensure that the backdoor attack does not compromise their benign functionality under the attack. (2) \textit{Backdoor attack success rate (ASR)} measures the success rate of manipulating a model's output when test data from non-target classes are patched with the trigger \textit{T}. For an effective defense method, the backdoored model should achieve a low \textit{ASR} to ensure its robustness.

\subsection{Effectiveness of Backdoor Invalidator}
We evaluate the the proposed method (denoted as BI) as a plug-in backdoor defense strategy by integrating it with existing popular SSL methods.  As demonstrated in Table \ref{tab:main_results} and Table \ref{tab:more_results} (Appendix\ref{appendix: other ssl methods}), our method significantly lowers the \textit{ASR} while preserving performance on clean data. In addition to the outstanding performance, we observed several other interesting phenomena: (1) Our method achieves better performance on clean data in those high resolution image dataset like STL10 compared to CIFAR10 and CIFAR100. We hypothesize that the backdoor filtering component (Gaussian Filter) of our strategy may inadvertently remove some semantic information from the original data. (2) In most settings, Fixmatch with BI exhibits a lower ASR compared to Flexmatch with BI. We believe this is partly because Flexmatch employs a more aggressive thresholding method that makes greater use of unlabeled data, especially in the early stages of training. It is important to note that although we make certain assumptions about the characteristics of backdoor triggers based on the conclusions drawn in \cite{shejwalkar2023perils}, this does not compromise the generality of the proposed method. Our approach is designed to tackle more stealthy and targeted attacks that are specifically crafted to exploit vulnerabilities in SSL, let alone those attack methods for supervised learning. As shown in Table \ref{tab:main_results} and Table \ref{tab:more_results}, for conventional attacks like CL-Badnets\cite{gu2017identifying}, Narcissus\cite{chen2023practical} and DeHiB\cite{yan2021dehib}, BI also lowers the \textit{ASR} without sacrificing performance on clean data.

We then compared BI with existing backdoor defense methods. As shown in Table \ref{tab:defense_results}, our proposed BI is essentially the only effective defense strategy against previously successful attacks (Mosaic and Freq). However, we also acknowledge that BI sometimes compromises clean data accuracy to enhance backdoor defense effectiveness. Furthermore, we evaluate the performance of BI under varying numbers of labeled data and poisoned data, as illustrated by Table \ref{tab: label_num} and Table \ref{tab: poison_num} in Appendix \ref{appedix: label_num} and Appendix \ref{appendix: poison_num}, respectively. These results demonstrate that BI consistently achieves satisfactory backdoor defending ability (\textit{ASR}) across different quantities of labels and poison ratios. For a more comprehensive analysis, please refer to Appendix \ref{appedix: label_num} and Appendix \ref{appendix: poison_num}.
\begin{figure}[t]
    \centering
    \includegraphics[width=0.49\columnwidth]{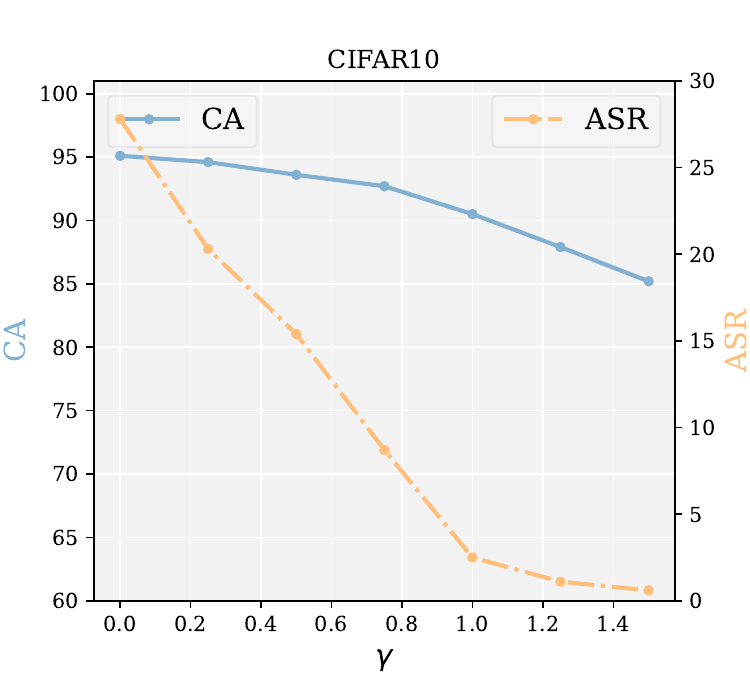}
    \includegraphics[width=0.49\columnwidth]{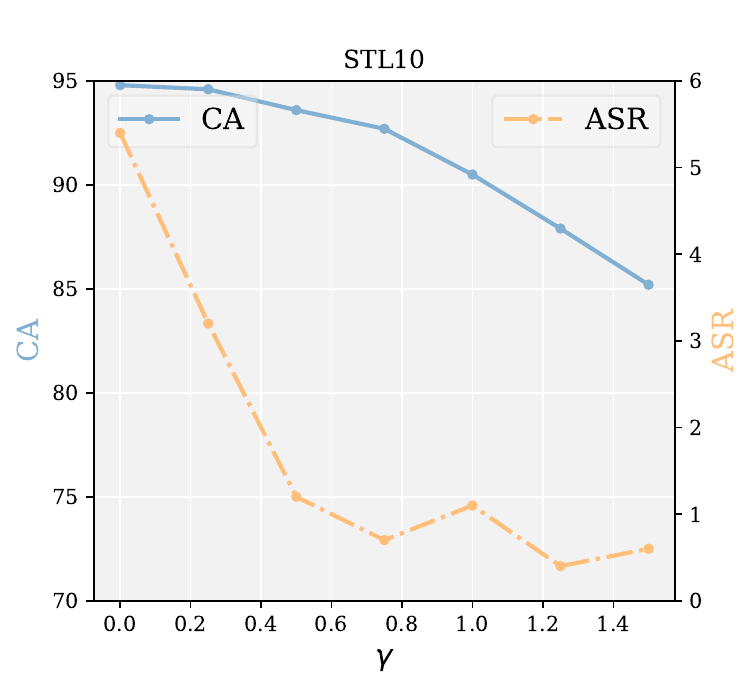}
    \caption{Sensitivity analysis on $\gamma$.}
    \label{fig: gamma}
\end{figure}
% \vspace{-5pt}
\subsection{Ablation Study}
To better understand the contributions of each component in our proposed BI method, we conduct a detailed ablation study on three main components: Gaussian Filter, complementary learning term $\mathcal{L}_{comp}$, and trigger mixup. Table \ref{tab: ablation} illustrates the impact of each component when combined with an existing SSL method. The results show that all three components can independently reduce \textit{ASR} while having different impacts on \textit{CA}. Specifically, Gaussian Filter and complementary learning term $\mathcal{L}_{comp}$ significantly decrease the backdoor risks; however, they also compromise the accuracy on clean data. This can be attributed to their partial impairment of the pseudo-labeling mechanism, which has been proven crucial in existing SSL methods \cite{li2011towards}. At the same time, trigger mixup acts as a relatively mild backdoor defense strategy and performs better in datasets with fewer total categories. This observation may be explained by the fact that, although the backdoor trigger is associated with all categories, it is linked much more frequently with the target class compared to other classes in datasets with a larger number of categories. Ultimately, combining these techniques allows them to complement each other, enhancing our goal of developing a high-performance, backdoor-robust SSL algorithm.

\subsection{Sensitivity Analysis}
In this section, we present a sensitivity analysis of the hyperparameters in our proposed method. It is crucial to highlight that in this specific experiment, the choice of the target class significantly affects the effectiveness of the backdoor attack and the identification of the optimal hyperparameters, which we give a more detailed illustration in Appendix \ref{appendix: target class}. Given the impracticality of testing every category as the target class, we will use class 0 (the first class in the list) as the target class in the subsequent discussion. As depicted in Figure \ref{fig: gamma}, the Gaussian Filter radius $\gamma$ acts as a moderator between \textit{CA} and \textit{ASR}. Higher values of $\gamma$ effectively mitigate the risk of backdoor attacks but at the expense of clean data accuracy. As shown in Table \ref{tab:main_results}, this trade-off is more noticeable in lower-resolution image datasets such as CIFAR10 and CIFAR100, where excessive blurring renders those images even unrecognizable. Regarding the trigger mixup coefficient $\alpha$ in Eq.\ref{loss2}, we employ a cosine scheduler defined by $\alpha_t = \alpha_{\min} + \frac{1}{2}(1 - \alpha_{\min})(1 + \cos(\frac{t}{t_{\max}}))$. This approach ensures the gradual integration of the consistency loss, where $t$ represents the current iteration number and $t_{\max}$ the maximum number of iterations.As illustrated in Figure \ref{fig: alpha}, results keep stable across different $\alpha_{\min}$ and achieves satisfying \textit{CA} and \textit{ASR} when $\alpha_{\min}>0.2$.

\begin{figure}[t]
    \centering
    \includegraphics[width=0.49\columnwidth]{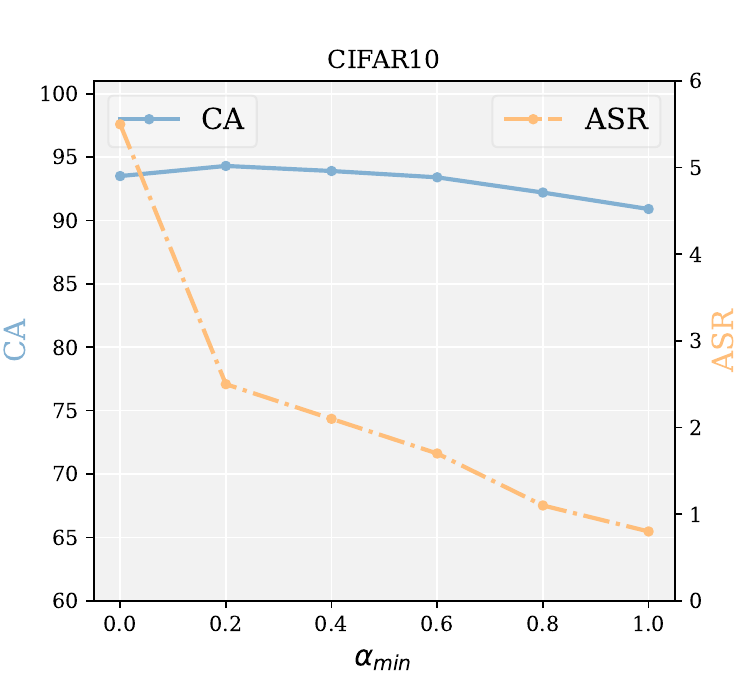}
    \includegraphics[width=0.49\columnwidth]{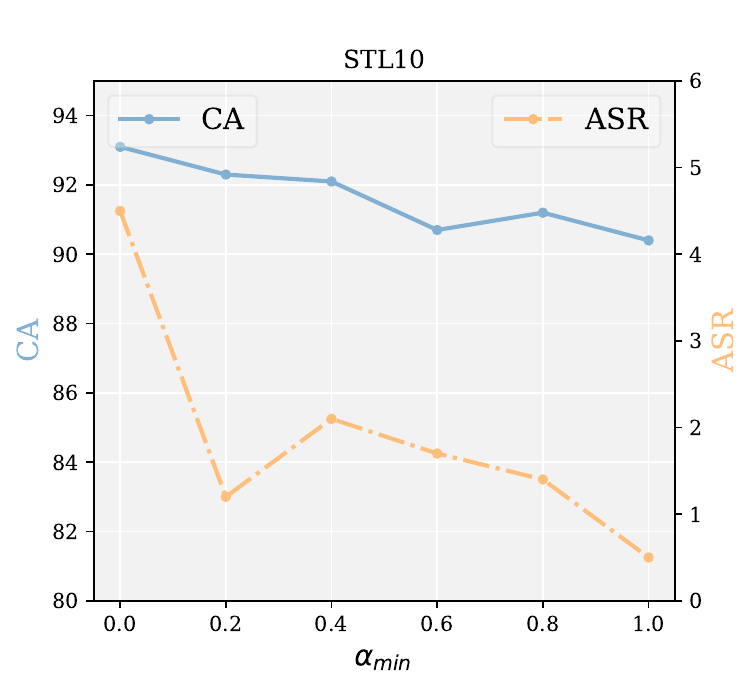}
    \caption{Sensitivity analysis on $\alpha_{\min}$.}
    \label{fig: alpha}
\end{figure}
\vspace{-5pt}

\section{Related Works}
\noindent{\textbf{Semi-supervised Learning.}}
Semi-supervised learning (SSL) is a well-established field featuring a wide range of approaches\cite{mclachlan1975iterative}. In this section, we focus on methods that adopt self-training paradigm, which represents the most mainstream techniques in modern SSL\cite{scudder1965probability}. The core concept involves treating the model's output probabilities as either soft or hard pseudo labels for unlabeled data\cite{lee2013pseudo, berthelot2019remixmatch, zhai2019s4l}. Additionally, consistency regularization is employed to ensure that predictions on perturbed versions of the unlabeled data remain the same\cite{xie2020unsupervised, xu2021dash, wang2022freematch, chen2023softmatch}. Moreover, there are studies that concentrate on enhancing the model's robustness in SSL\cite{guo2022robust, jia2023bidirectional, li2023iomatch, wan2024unlocking}. However, these papers, referred to as safe SSL, focus on learning from unlabeled data with distribution shifts or additional classes, excluding the consideration of poisoned data \cite{li2019towards, guo2020safe}.

\noindent{\textbf{Backdoor Attacks and Defenses.}}
A backdoor adversary aims to implant backdoor functionality into a target model. Recent studies \cite{connor2022rethinking, turner2019label} have highlighted that almost all contemporary SSL methods remain highly susceptible to certain specially designed clean label-type backdoor attacks. Shejwalkar's work, as one of the most influencing works in SSL backdoor attacks, has systematically identified characteristics that successful backdoor trigger should possess\cite{shejwalkar2023perils}. They also find that defense methods in supervised learning \cite{shokri2020bypassing,liu2023beating} become ineffective or challenging to be implemented. The primary reason is that these defense methods heavily rely on labeled data or some characteristics of learned features to detect poisoned samples or neutralize implanted backdoors\cite{dong2021black, tejankar2023defending, wang2024mm}. However, in SSL, the extremely limited number of labeled data points makes such approaches unfeasible. Specifically, common observations for backdoored model in supervised learning like activation clustering \cite{chen2018detecting}, loss divergence \cite{li2021anti} and large-margin logit\cite{MM-BD} no longer exists in attacked SSL models. These issues also reflect the urgency and difficulty of developing backdoor defense methods for SSL models.

\section{Conclusion}
In this study, we focus on protecting SSL algorithms from backdoor attacks. By analyzing the mechanics of existing successful attacks from a causal perspective, we introduce the first plug-in defense method for SSL, designed to filter, obstruct, and dilute these attacks through comprehensive data processing and label learning strategies. We further demonstrate the effectiveness of our proposed BI in enhancing backdoor defense effectiveness and preserving clean data accuracy, supported by extensive empirical evidence and theoretical validations. It's also worth mentioning that BI  does not require additional detection steps, making it more
efficient than most existing defense strategies.

\newpage

{
    \small
    \bibliographystyle{ieeenat_fullname}
    \bibliography{main}

\begin{thebibliography}{62}
\providecommand{\natexlab}[1]{#1}
\providecommand{\url}[1]{\texttt{#1}}
\expandafter\ifx\csname urlstyle\endcsname\relax
  \providecommand{\doi}[1]{doi: #1}\else
  \providecommand{\doi}{doi: \begingroup \urlstyle{rm}\Url}\fi

\bibitem[Asgari et~al.(2022)Asgari, Khani, Khani, Gholami, Tran, Mahdavi~Amiri, and Hamarneh]{asgari2022masktune}
Saeid Asgari, Aliasghar Khani, Fereshte Khani, Ali Gholami, Linh Tran, Ali Mahdavi~Amiri, and Ghassan Hamarneh.
\newblock Masktune: Mitigating spurious correlations by forcing to explore.
\newblock \emph{Advances in Neural Information Processing Systems}, 35:\penalty0 23284--23296, 2022.

\bibitem[Bartlett and Mendelson(2002)]{bartlett2002rademacher}
Peter~L Bartlett and Shahar Mendelson.
\newblock Rademacher and gaussian complexities: Risk bounds and structural results.
\newblock \emph{Journal of Machine Learning Research}, 3\penalty0 (Nov):\penalty0 463--482, 2002.

\bibitem[Berthelot et~al.(2019)Berthelot, Carlini, Goodfellow, Papernot, Oliver, and Raffel]{berthelot2019mixmatch}
David Berthelot, Nicholas Carlini, Ian Goodfellow, Nicolas Papernot, Avital Oliver, and Colin~A Raffel.
\newblock Mixmatch: A holistic approach to semi-supervised learning.
\newblock \emph{Advances in neural information processing systems}, 32, 2019.

\bibitem[Berthelot et~al.(2020)Berthelot, Carlini, Cubuk, Kurakin, Sohn, Zhang, and Raffel]{berthelot2019remixmatch}
David Berthelot, Nicholas Carlini, Ekin~D Cubuk, Alex Kurakin, Kihyuk Sohn, Han Zhang, and Colin Raffel.
\newblock Remixmatch: Semi-supervised learning with distribution alignment and augmentation anchoring.
\newblock \emph{ICLR}, 2020.

\bibitem[Carlini(2021)]{carlini2021poisoning}
Nicholas Carlini.
\newblock Poisoning the unlabeled dataset of $\{$Semi-Supervised$\}$ learning.
\newblock In \emph{30th USENIX Security Symposium (USENIX Security 21)}, pages 1577--1592, 2021.

\bibitem[Chapelle et~al.(2006)Chapelle, Schölkopf, and Zien]{learning2006semi}
Olivier Chapelle, Bernhard Schölkopf, and Alexander Zien.
\newblock Semi-supervised learning.
\newblock \emph{The MIT Press}, 5, 2006.

\bibitem[Chen et~al.(2018)Chen, Carvalho, Baracaldo, Ludwig, Edwards, Lee, Molloy, and Srivastava]{chen2018detecting}
Bryant Chen, Wilka Carvalho, Nathalie Baracaldo, Heiko Ludwig, Benjamin Edwards, Taesung Lee, Ian Molloy, and Biplav Srivastava.
\newblock Detecting backdoor attacks on deep neural networks by activation clustering.
\newblock \emph{arXiv preprint arXiv:1811.03728}, 2018.

\bibitem[Chen et~al.(2023{\natexlab{a}})Chen, Tao, Fan, Wang, Wang, Schiele, Xie, Raj, and Savvides]{chen2023softmatch}
Hao Chen, Ran Tao, Yue Fan, Yidong Wang, Jindong Wang, Bernt Schiele, Xing Xie, Bhiksha Raj, and Marios Savvides.
\newblock Softmatch: Addressing the quantity-quality trade-off in semi-supervised learning.
\newblock \emph{ICLR}, 2023{\natexlab{a}}.

\bibitem[Chen et~al.(2023{\natexlab{b}})Chen, Yang, Lin, Lu, Duan, and Chai]{chen2023practical}
Peng Chen, Jirui Yang, Junxiong Lin, Zhihui Lu, Qiang Duan, and Hongfeng Chai.
\newblock A practical clean-label backdoor attack with limited information in vertical federated learning.
\newblock In \emph{ICDM}, pages 41--50. IEEE, 2023{\natexlab{b}}.

\bibitem[Coates et~al.(2011)Coates, Ng, and Lee]{coates2011analysis}
Adam Coates, Andrew Ng, and Honglak Lee.
\newblock An analysis of single-layer networks in unsupervised feature learning.
\newblock In \emph{AISTATS}, pages 215--223. JMLR Workshop and Conference Proceedings, 2011.

\bibitem[Connor and Emanuele(2022)]{connor2022rethinking}
Marissa Connor and Vincent Emanuele.
\newblock Rethinking backdoor data poisoning attacks in the context of semi-supervised learning.
\newblock \emph{arXiv preprint arXiv:2212.02582}, 2022.

\bibitem[Dong et~al.(2021)Dong, Yang, Deng, Pang, Xiao, Su, and Zhu]{dong2021black}
Yinpeng Dong, Xiao Yang, Zhijie Deng, Tianyu Pang, Zihao Xiao, Hang Su, and Jun Zhu.
\newblock Black-box detection of backdoor attacks with limited information and data.
\newblock In \emph{CVPR}, pages 16482--16491, 2021.

\bibitem[Feng et~al.(2020)Feng, Kaneko, Han, Niu, An, and Sugiyama]{feng2020learning}
Lei Feng, Takuo Kaneko, Bo Han, Gang Niu, Bo An, and Masashi Sugiyama.
\newblock Learning with multiple complementary labels.
\newblock In \emph{ICML}, pages 3072--3081. PMLR, 2020.

\bibitem[Gao et~al.(2023)Gao, Xu, and Zhang]{gao2023unbiased}
Yi Gao, Miao Xu, and Min-Ling Zhang.
\newblock Unbiased risk estimator to multi-labeled complementary label learning.
\newblock In \emph{IJCAI}, pages 3732--3740. IJCAI, 2023.

\bibitem[Gao et~al.(2024)Gao, Xu, and Zhang]{gao2024complementary}
Yi Gao, Miao Xu, and Min-Ling Zhang.
\newblock Complementary to multiple labels: A correlation-aware correction approach.
\newblock \emph{IEEE Transactions on Pattern Analysis and Machine Intelligence}, 2024.

\bibitem[Gu et~al.(2017)Gu, Dolan-Gavitt, and BadNets]{gu2017identifying}
T Gu, B Dolan-Gavitt, and S BadNets.
\newblock Identifying vulnerabilities in the machine learning model supply chain.
\newblock In \emph{Proceedings of the Neural Information Processing Symposium Workshop Mach. Learning Security (MLSec)}, pages 1--5, 2017.

\bibitem[Guo et~al.(2020)Guo, Zhang, Jiang, Li, and Zhou]{guo2020safe}
Lan-Zhe Guo, Zhen-Yu Zhang, Yuan Jiang, Yu-Feng Li, and Zhi-Hua Zhou.
\newblock Safe deep semi-supervised learning for unseen-class unlabeled data.
\newblock In \emph{ICML}, pages 3897--3906. PMLR, 2020.

\bibitem[Guo et~al.(2022)Guo, Zhang, Wu, Shao, and Li]{guo2022robust}
Lan-Zhe Guo, Yi-Ge Zhang, Zhi-Fan Wu, Jie-Jing Shao, and Yu-Feng Li.
\newblock Robust semi-supervised learning when not all classes have labels.
\newblock \emph{Advances in Neural Information Processing Systems}, 35:\penalty0 3305--3317, 2022.

\bibitem[Gwosdek et~al.(2012)Gwosdek, Grewenig, Bruhn, and Weickert]{gwosdek2012theoretical}
Pascal Gwosdek, Sven Grewenig, Andr{\'e}s Bruhn, and Joachim Weickert.
\newblock Theoretical foundations of gaussian convolution by extended box filtering.
\newblock In \emph{Scale Space and Variational Methods in Computer Vision: Third International Conference, SSVM 2011, Ein-Gedi, Israel, May 29--June 2, 2011, Revised Selected Papers 3}, pages 447--458. Springer, 2012.

\bibitem[Ishida et~al.(2017)Ishida, Niu, Hu, and Sugiyama]{ishida2017learning}
Takashi Ishida, Gang Niu, Weihua Hu, and Masashi Sugiyama.
\newblock Learning from complementary labels.
\newblock \emph{Advances in neural information processing systems}, 30, 2017.

\bibitem[Jia et~al.(2023)Jia, Guo, Zhou, Shao, Xiang, and Li]{jia2023bidirectional}
Lin-Han Jia, Lan-Zhe Guo, Zhi Zhou, Jie-Jing Shao, Yuke Xiang, and Yu-Feng Li.
\newblock Bidirectional adaptation for robust semi-supervised learning with inconsistent data distributions.
\newblock In \emph{ICML}, pages 14886--14901. PMLR, 2023.

\bibitem[Krizhevsky et~al.(2009)Krizhevsky, Hinton, et~al.]{krizhevsky2009learning}
Alex Krizhevsky, Geoffrey Hinton, et~al.
\newblock Learning multiple layers of features from tiny images.
\newblock \emph{https://www.cs.utoronto.ca/}, 2009.

\bibitem[Lee et~al.(2013)]{lee2013pseudo}
Dong-Hyun Lee et~al.
\newblock Pseudo-label: The simple and efficient semi-supervised learning method for deep neural networks.
\newblock In \emph{Workshop on challenges in representation learning, ICML}, page 896. Atlanta, 2013.

\bibitem[Li et~al.(2024{\natexlab{a}})Li, Wu, Liu, Yu, Yang, Han, and Liu]{li2024instant}
Muyang Li, Runze Wu, Haoyu Liu, Jun Yu, Xun Yang, Bo Han, and Tongliang Liu.
\newblock Instant: Semi-supervised learning with instance-dependent thresholds.
\newblock \emph{Advances in Neural Information Processing Systems}, 36, 2024{\natexlab{a}}.

\bibitem[Li et~al.(2024{\natexlab{b}})Li, Jin, Wang, Wu, Liu, Tan, and Li]{li2023semireward}
Siyuan Li, Weiyang Jin, Zedong Wang, Fang Wu, Zicheng Liu, Cheng Tan, and Stan~Z Li.
\newblock Semireward: A general reward model for semi-supervised learning.
\newblock \emph{ICLR}, 2024{\natexlab{b}}.

\bibitem[Li et~al.(2021)Li, Lyu, Koren, Lyu, Li, and Ma]{li2021anti}
Yige Li, Xixiang Lyu, Nodens Koren, Lingjuan Lyu, Bo Li, and Xingjun Ma.
\newblock Anti-backdoor learning: Training clean models on poisoned data.
\newblock \emph{Advances in Neural Information Processing Systems}, 34:\penalty0 14900--14912, 2021.

\bibitem[Li et~al.(2022)Li, Lyu, Koren, Lyu, Li, and Ma]{li2021neural}
Yige Li, Xixiang Lyu, Nodens Koren, Lingjuan Lyu, Bo Li, and Xingjun Ma.
\newblock Neural attention distillation: Erasing backdoor triggers from deep neural networks.
\newblock \emph{ICLR}, 2022.

\bibitem[Li et~al.(2024{\natexlab{c}})Li, Jiang, Li, and Xia]{li2024survey}
Yiming Li, Yong Jiang, Zhifeng Li, and Shu-Tao Xia.
\newblock Backdoor learning: A survey.
\newblock \emph{IEEE Transactions on Neural Networks and Learning Systems}, 35\penalty0 (1):\penalty0 5--22, 2024{\natexlab{c}}.

\bibitem[Li and Zhou(2015)]{li2011towards}
Yu-Feng Li and Zhi-Hua Zhou.
\newblock Towards making unlabeled data never hurt.
\newblock \emph{IEEE Transactions on Pattern Analysis and Machine Intelligence}, 37\penalty0 (1):\penalty0 175--188, 2015.

\bibitem[Li et~al.(2019)Li, Guo, and Zhou]{li2019towards}
Yu-Feng Li, Lan-Zhe Guo, and Zhi-Hua Zhou.
\newblock Towards safe weakly supervised learning.
\newblock \emph{IEEE transactions on pattern analysis and machine intelligence}, 43\penalty0 (1):\penalty0 334--346, 2019.

\bibitem[Li et~al.(2023)Li, Qi, Shi, and Gao]{li2023iomatch}
Zekun Li, Lei Qi, Yinghuan Shi, and Yang Gao.
\newblock Iomatch: Simplifying open-set semi-supervised learning with joint inliers and outliers utilization.
\newblock In \emph{CVPR}, pages 15870--15879, 2023.

\bibitem[Liu et~al.(2018)Liu, Dolan-Gavitt, and Garg]{liu2018fine}
Kang Liu, Brendan Dolan-Gavitt, and Siddharth Garg.
\newblock Fine-pruning: Defending against backdooring attacks on deep neural networks.
\newblock In \emph{International symposium on research in attacks, intrusions, and defenses}, pages 273--294. Springer, 2018.

\bibitem[Liu et~al.(2023)Liu, Sangiovanni-Vincentelli, and Yue]{liu2023beating}
Min Liu, Alberto Sangiovanni-Vincentelli, and Xiangyu Yue.
\newblock Beating backdoor attack at its own game.
\newblock In \emph{CVPR}, pages 4620--4629, 2023.

\bibitem[McLachlan(1975)]{mclachlan1975iterative}
Geoffrey~J McLachlan.
\newblock Iterative reclassification procedure for constructing an asymptotically optimal rule of allocation in discriminant analysis.
\newblock \emph{Journal of the American Statistical Association}, 70\penalty0 (350):\penalty0 365--369, 1975.

\bibitem[Mohri et~al.(2018)Mohri, Rostamizadeh, and Talwalkar]{mohri2018foundations}
Mehryar Mohri, Afshin Rostamizadeh, and Ameet Talwalkar.
\newblock \emph{Foundations of machine learning}.
\newblock MIT press, 2018.

\bibitem[Netzer et~al.(2011)Netzer, Wang, Coates, Bissacco, Wu, Ng, et~al.]{netzer2011reading}
Yuval Netzer, Tao Wang, Adam Coates, Alessandro Bissacco, Baolin Wu, Andrew~Y Ng, et~al.
\newblock Reading digits in natural images with unsupervised feature learning.
\newblock In \emph{NIPS workshop on deep learning and unsupervised feature learning}, number~2 in 4. Granada, 2011.

\bibitem[Saha et~al.(2020)Saha, Subramanya, and Pirsiavash]{saha2020hidden}
Aniruddha Saha, Akshayvarun Subramanya, and Hamed Pirsiavash.
\newblock Hidden trigger backdoor attacks.
\newblock In \emph{Proceedings of the AAAI conference on artificial intelligence}, pages 11957--11965, 2020.

\bibitem[Scudder(1965)]{scudder1965probability}
Henry Scudder.
\newblock Probability of error of some adaptive pattern-recognition machines.
\newblock \emph{IEEE Transactions on Information Theory}, 11\penalty0 (3):\penalty0 363--371, 1965.

\bibitem[Shejwalkar et~al.(2023)Shejwalkar, Virat, Lyu, Houmansadr, and Amir]{shejwalkar2023perils}
Shejwalkar, Virat, Lingjuan Lyu, Houmansadr, and Amir.
\newblock The perils of learning from unlabeled data: Backdoor attacks on semi-supervised learning.
\newblock In \emph{CVPR}, pages 4730--4740, 2023.

\bibitem[Shokri et~al.(2020)]{shokri2020bypassing}
Reza Shokri et~al.
\newblock Bypassing backdoor detection algorithms in deep learning.
\newblock In \emph{2020 IEEE European Symposium on Security and Privacy (EuroS\&P)}, pages 175--183. IEEE, 2020.

\bibitem[Sohn et~al.(2020)Sohn, Berthelot, Carlini, Zhang, Zhang, Raffel, Cubuk, Kurakin, and Li]{sohn2020fixmatch}
Kihyuk Sohn, David Berthelot, Nicholas Carlini, Zizhao Zhang, Han Zhang, Colin~A Raffel, Ekin~Dogus Cubuk, Alexey Kurakin, and Chun-Liang Li.
\newblock Fixmatch: Simplifying semi-supervised learning with consistency and confidence.
\newblock \emph{Advances in neural information processing systems}, 33:\penalty0 596--608, 2020.

\bibitem[Talagrand(1995)]{talagrand1995concentration}
Michel Talagrand.
\newblock Concentration of measure and isoperimetric inequalities in product spaces.
\newblock \emph{Publications Math{\'e}matiques de l'Institut des Hautes Etudes Scientifiques}, 81:\penalty0 73--205, 1995.

\bibitem[Tejankar et~al.(2023)Tejankar, Sanjabi, Wang, Wang, Firooz, Pirsiavash, and Tan]{tejankar2023defending}
Ajinkya Tejankar, Maziar Sanjabi, Qifan Wang, Sinong Wang, Hamed Firooz, Hamed Pirsiavash, and Liang Tan.
\newblock Defending against patch-based backdoor attacks on self-supervised learning.
\newblock In \emph{CVPR}, pages 12239--12249, 2023.

\bibitem[Turner et~al.(2019)Turner, Tsipras, and Madry]{turner2019label}
Alexander Turner, Dimitris Tsipras, and Aleksander Madry.
\newblock Label-consistent backdoor attacks.
\newblock \emph{arXiv preprint arXiv:1912.02771}, 2019.

\bibitem[Wan et~al.(2024)Wan, Wang, Xie, Li, Huang, and Chen]{wan2024unlocking}
Wenhai Wan, Xinrui Wang, Ming-Kun Xie, Shao-Yuan Li, Sheng-Jun Huang, and Songcan Chen.
\newblock Unlocking the power of open set: A new perspective for open-set noisy label learning.
\newblock In \emph{AAAI}, pages 15438--15446, 2024.

\bibitem[Wang et~al.(2021)Wang, Feng, and Zhang]{ijcai2021p423}
Deng-Bao Wang, Lei Feng, and Min-Ling Zhang.
\newblock Learning from complementary labels via partial-output consistency regularization.
\newblock In \emph{IJCAI}, pages 3075--3081, 2021.

\bibitem[Wang et~al.(2024{\natexlab{a}})Wang, Xiang, Miller, and Kesidis]{MM-BD}
Hang Wang, Zhen Xiang, David~J Miller, and George Kesidis.
\newblock Mm-bd: Post-training detection of backdoor attacks with arbitrary backdoor pattern types using a maximum margin statistic.
\newblock In \emph{IEEE Symposium on Security and Privacy}, 2024{\natexlab{a}}.

\bibitem[Wang et~al.(2024{\natexlab{b}})Wang, Xiang, Miller, and Kesidis]{wang2024mm}
Hang Wang, Zhen Xiang, David~J Miller, and George Kesidis.
\newblock Mm-bd: Post-training detection of backdoor attacks with arbitrary backdoor pattern types using a maximum margin statistic.
\newblock In \emph{2024 IEEE Symposium on Security and Privacy (SP)}, pages 15--15. IEEE Computer Society, 2024{\natexlab{b}}.

\bibitem[Wang et~al.(2022{\natexlab{a}})Wang, Yao, Xu, An, Tong, and Wang]{wang2022invisible}
Tong Wang, Yuan Yao, Feng Xu, Shengwei An, Hanghang Tong, and Ting Wang.
\newblock An invisible black-box backdoor attack through frequency domain.
\newblock In \emph{ECCV}, pages 396--413. Springer, 2022{\natexlab{a}}.

\bibitem[Wang et~al.(2022{\natexlab{b}})Wang, Chen, Fan, Sun, Tao, Hou, Wang, Yang, Zhou, Guo, et~al.]{wang2022usb}
Yidong Wang, Hao Chen, Yue Fan, Wang Sun, Ran Tao, Wenxin Hou, Renjie Wang, Linyi Yang, Zhi Zhou, Lan-Zhe Guo, et~al.
\newblock Usb: A unified semi-supervised learning benchmark for classification.
\newblock \emph{Advances in Neural Information Processing Systems}, 35:\penalty0 3938--3961, 2022{\natexlab{b}}.

\bibitem[Wang et~al.(2022{\natexlab{c}})Wang, Chen, Heng, Hou, Fan, Wu, Wang, Savvides, Shinozaki, Raj, et~al.]{wang2022freematch}
Yidong Wang, Hao Chen, Qiang Heng, Wenxin Hou, Yue Fan, Zhen Wu, Jindong Wang, Marios Savvides, Takahiro Shinozaki, Bhiksha Raj, et~al.
\newblock Freematch: Self-adaptive thresholding for semi-supervised learning.
\newblock \emph{Advances in Neural Information Processing Systems}, 2022{\natexlab{c}}.

\bibitem[Xie et~al.(2020)Xie, Dai, Hovy, Luong, and Le]{xie2020unsupervised}
Qizhe Xie, Zihang Dai, Eduard Hovy, Thang Luong, and Quoc Le.
\newblock Unsupervised data augmentation for consistency training.
\newblock \emph{Advances in neural information processing systems}, 33:\penalty0 6256--6268, 2020.

\bibitem[Xu et~al.(2020)Xu, Gong, Chen, Liu, Zhang, and Batmanghelich]{xu2020generative}
Yanwu Xu, Mingming Gong, Junxiang Chen, Tongliang Liu, Kun Zhang, and Kayhan Batmanghelich.
\newblock Generative-discriminative complementary learning.
\newblock In \emph{AAAI}, pages 6526--6533, 2020.

\bibitem[Xu et~al.(2021)Xu, Shang, Ye, Qian, Li, Sun, Li, and Jin]{xu2021dash}
Yi Xu, Lei Shang, Jinxing Ye, Qi Qian, Yu-Feng Li, Baigui Sun, Hao Li, and Rong Jin.
\newblock Dash: Semi-supervised learning with dynamic thresholding.
\newblock In \emph{ICML}, pages 11525--11536. PMLR, 2021.

\bibitem[Yan et~al.(2021{\natexlab{a}})Yan, Li, TIan, Wu, Li, Chen, and Poor]{yan2021dehib}
Zhicong Yan, Gaolei Li, Yuan TIan, Jun Wu, Shenghong Li, Mingzhe Chen, and H~Vincent Poor.
\newblock Dehib: Deep hidden backdoor attack on semi-supervised learning via adversarial perturbation.
\newblock In \emph{AAAI}, pages 10585--10593, 2021{\natexlab{a}}.

\bibitem[Yan et~al.(2021{\natexlab{b}})Yan, Wu, Li, Li, and Guizani]{yan2021deep}
Zhicong Yan, Jun Wu, Gaolei Li, Shenghong Li, and Mohsen Guizani.
\newblock Deep neural backdoor in semi-supervised learning: Threats and countermeasures.
\newblock \emph{IEEE Transactions on Information Forensics and Security}, 16:\penalty0 4827--4842, 2021{\natexlab{b}}.

\bibitem[Yu et~al.(2018)Yu, Liu, Gong, and Tao]{yu2018learning}
Xiyu Yu, Tongliang Liu, Mingming Gong, and Dacheng Tao.
\newblock Learning with biased complementary labels.
\newblock In \emph{ECCV}, pages 68--83, 2018.

\bibitem[Zeng et~al.(2021)Zeng, Park, Mao, and Jia]{zeng2021rethinking}
Yi Zeng, Won Park, Z~Morley Mao, and Ruoxi Jia.
\newblock Rethinking the backdoor attacks' triggers: A frequency perspective.
\newblock In \emph{CVPR}, pages 16473--16481, 2021.

\bibitem[Zhai et~al.(2019)Zhai, Oliver, Kolesnikov, and Beyer]{zhai2019s4l}
Xiaohua Zhai, Avital Oliver, Alexander Kolesnikov, and Lucas Beyer.
\newblock S4l: Self-supervised semi-supervised learning.
\newblock In \emph{CVPR}, pages 1476--1485, 2019.

\bibitem[Zhang et~al.(2021)Zhang, Wang, Hou, Wu, Wang, Okumura, and Shinozaki]{zhang2021flexmatch}
Bowen Zhang, Yidong Wang, Wenxin Hou, Hao Wu, Jindong Wang, Manabu Okumura, and Takahiro Shinozaki.
\newblock Flexmatch: Boosting semi-supervised learning with curriculum pseudo labeling.
\newblock \emph{Advances in Neural Information Processing Systems}, 34:\penalty0 18408--18419, 2021.

\bibitem[Zhang et~al.(2018)Zhang, Cisse, Dauphin, and Lopez-Paz]{zhang2017mixup}
Hongyi Zhang, Moustapha Cisse, Yann~N Dauphin, and David Lopez-Paz.
\newblock mixup: Beyond empirical risk minimization.
\newblock \emph{ICLR}, 2018.

\bibitem[Zhang et~al.(2023)Zhang, Liu, Wang, Lu, and Hu]{zhang2023backdoor}
Zaixi Zhang, Qi Liu, Zhicai Wang, Zepu Lu, and Qingyong Hu.
\newblock Backdoor defense via deconfounded representation learning.
\newblock In \emph{CVPR}, pages 12228--12238, 2023.

\end{thebibliography}
}
% WARNING: do not forget to delete the supplementary pages from your submission 
% \input{sec/X_suppl}
\newpage
\appendix
\clearpage 
\section{Algorithm Description} \label{appendix: alg}
\begin{algorithm}[htb]
\caption{Training procedure of the proposed method}
\label{alg3}
\textbf{Input}: labeled batch $\mathcal{B}_l$, unlabeled batch $\mathcal{B}_u$;\\ 
\textbf{Parameter}: confidence threshold $\tau$, phase one iteration $t_1$, max iteration $t_{\max}$, Gaussian Filter radius $\gamma$;\\
\textbf{Output}: classifier $f(\cdot)$, feature extractor $g(\cdot)$ and model parameters $\Theta$;
\begin{algorithmic}[1] %[1] enables line numbers
\STATE \textbf{Initialize $\Theta$}, $t=0$ and compute trigger mixup coefficient $\alpha$ by a cosine scheduler;
\WHILE{$t$ $\le$ $t_1$}
\STATE Implement the Gaussian Filter on labeled batch $\mathcal{B}_l$ and unlabeled batch $\mathcal{B}_u$;
\STATE Compute the probability $g(x_l)$, $g(x_u)$ and the output label $f(x_l)$ , $f(x_u)$ of the input data;
\STATE Generate complementary labels $\hat{\bar{y_u}}$ by Algorithm 1;
\STATE Update the transition matrix $\mathbf{Q}$ by Algorithm 2;
\STATE Compute the loss via Eq.\ref{loss1} with $\hat{\bar{y_u}}$ and $\mathbf{Q}$;
\STATE Update model parameters $\Theta$ via optimizer;
\ENDWHILE
\WHILE{$t_1<t$ $\le$ $t_{\max}$}
\STATE Implement the Gaussian Filter on labeled batch $\mathcal{B}_l$ and unlabeled batch $\mathcal{B}_u$;
\STATE Compute the probability $g(x_l)$, $g(x_u)$ and the output label $f(x_l)$ , $f(x_u)$ of the input data;
\STATE Compute the loss via Eq.\ref{loss2}
\STATE Update model parameters $\Theta$ via optimizer;
\ENDWHILE
\STATE \textbf{return} classifier $f(\cdot)$, feature extractor $g(\cdot)$ and model parameters $\Theta$;
\end{algorithmic}
\end{algorithm}

\begin{figure}[ht]
\centering
\includegraphics[width=1\columnwidth]{./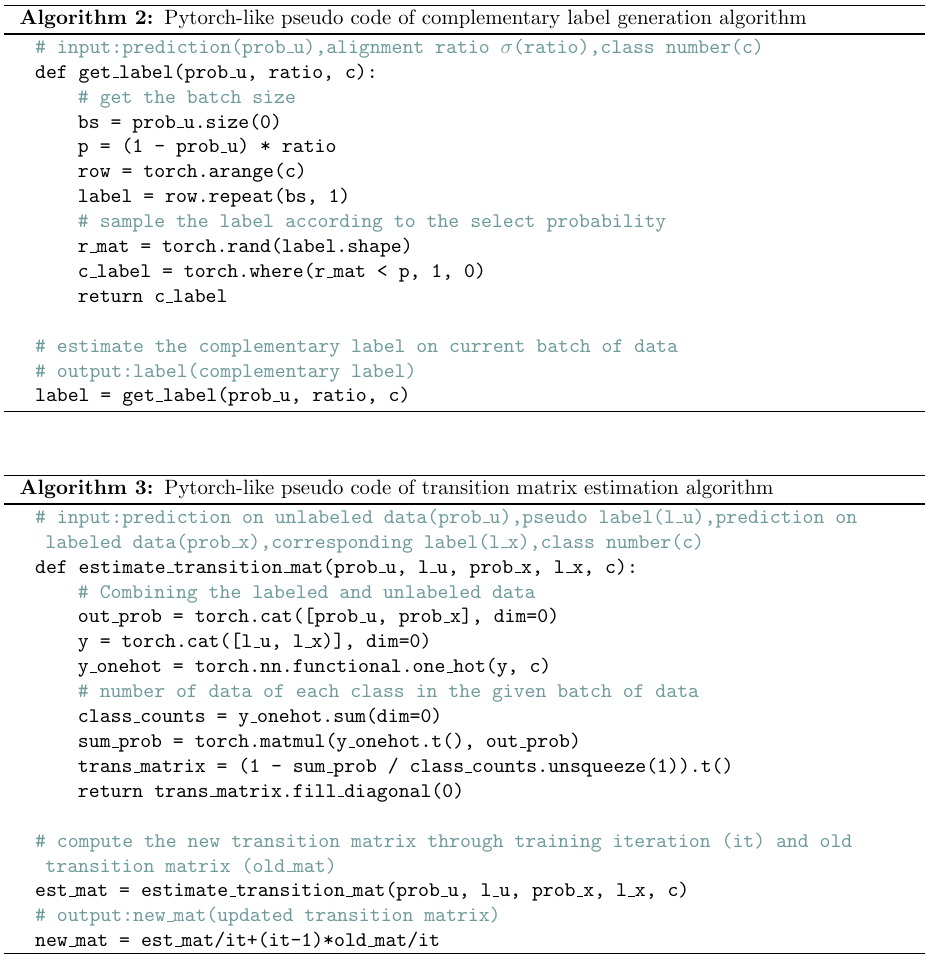}
\label{alg}
\end{figure}

% \begin{figure}[ht]
% \centering
% \includegraphics[width=1\columnwidth]{./}
% \label{alg2}
% \end{figure}

In this section, we provide a comprehensive description of the training procedure for the proposed method, as outlined in Algorithm 1. Furthermore, we include PyTorch-like pseudocode for the generation of complementary labels and the estimation of the transition matrix, which were discussed in the "Backdoor Obstruction" section. For a more detailed implementation and the specific code, please refer to the supplementary materials provided.

\section{Detailed Experimental Setup} \label{setup}

\begin{figure*}[ht]
    \centering
    \includegraphics[width=0.24\textwidth]{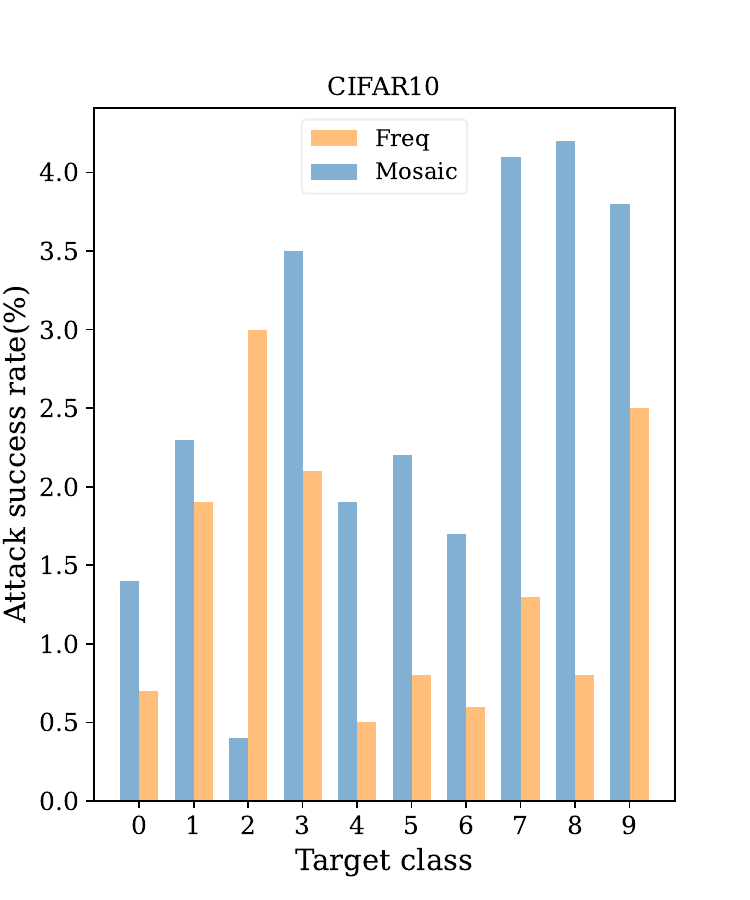}
    \includegraphics[width=0.24\textwidth]{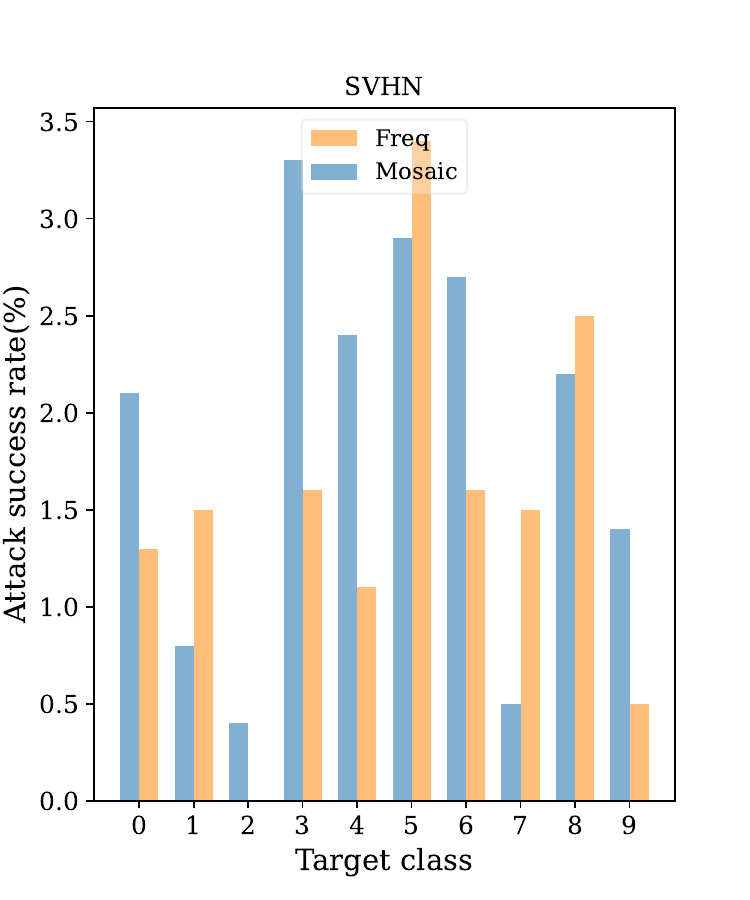}
    \includegraphics[width=0.24\textwidth]{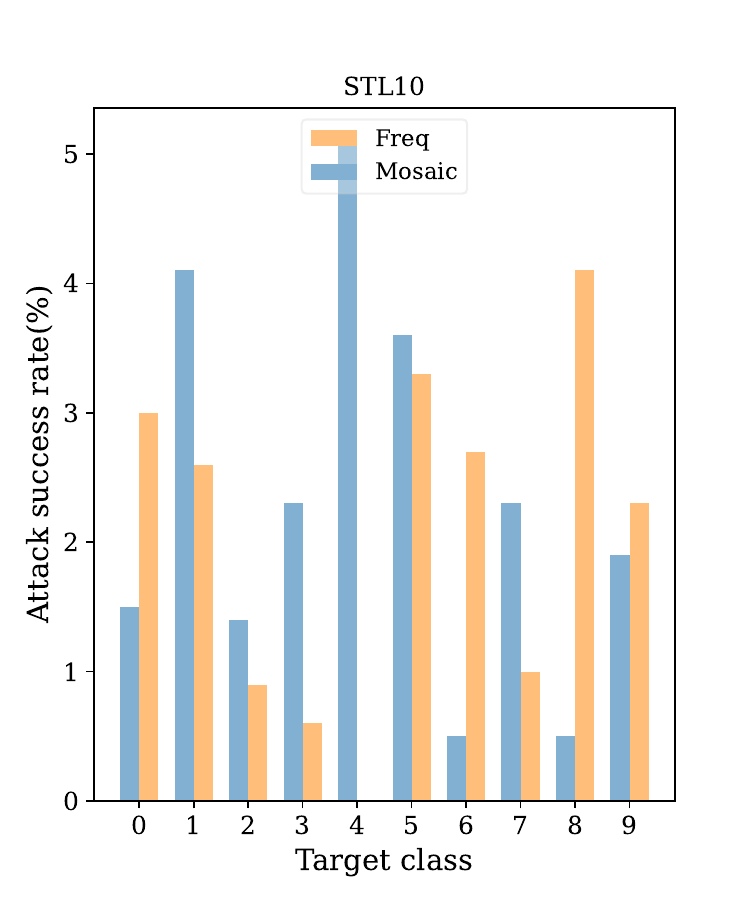}
    \includegraphics[width=0.24\textwidth]{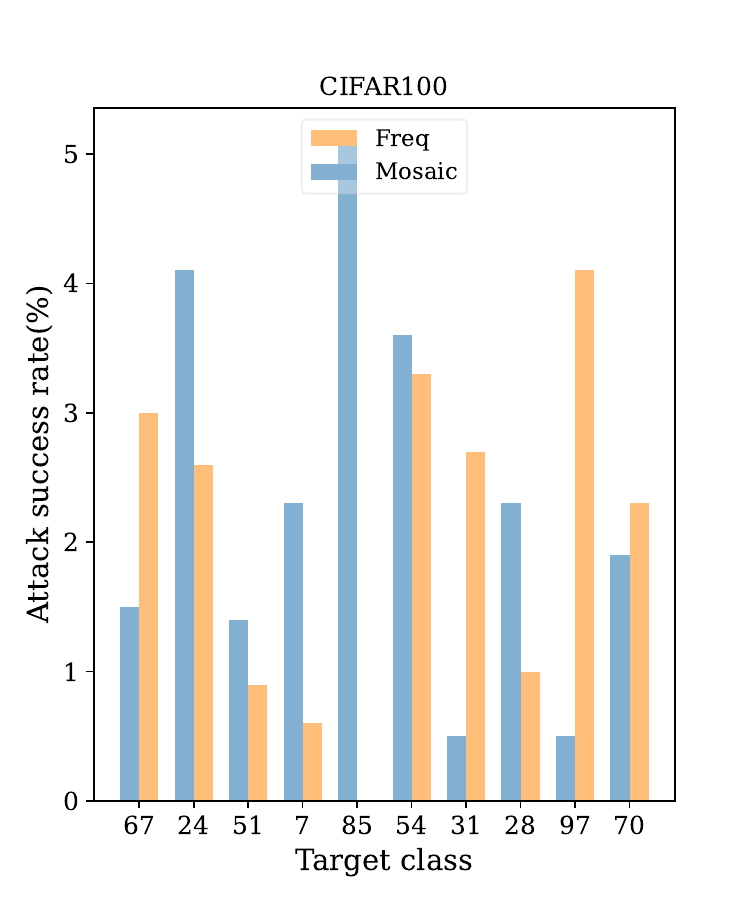}
    \caption{\textit{ASR} of our method across different target classes under Freq and Mosaic.}
    \label{target class}
\end{figure*}

\subsection{Datasets and model architectures} 
We evaluate our backdoor attacks using four datasets (CIFAR10, SVHN, CIFAR100, STL10) \cite{krizhevsky2009learning, coates2011analysis, netzer2011reading} commonly utilized to benchmark semi-supervised learning algorithms:
\begin{itemize}
    \item CIFAR10: This dataset is designed for a 10-class classification task and contains 60,000 RGB images, split into 50,000 for training and 10,000 for testing. Each image is 32 × 32 pixels with 3 channels. CIFAR10 is a class-balanced dataset, where each of the 10 classes contains exactly 6,000 images. For the semi-supervised learning models, we use 400 samples per class and employ a WideResNet architecture with a depth of 28, a widening factor of 2, and 1.47 million parameters.
    \item SVHN (Street View House Numbers): This dataset also supports a 10-class classification task and includes 73,257 images for training and 26,032 images for testing. Each image measures 32 × 32 pixels and has 3 channels. Unlike CIFAR10, SVHN is not class-balanced. The number of labeled samples per class used in SVHN is 10, and the same WideResNet architecture is applied.
    \item CIFAR100 is a dataset designed for a 100-class classification task, comprising 60,000 RGB images (50,000 for training and 10,000 for testing), each of size 32×32 and containing 3 channels. CIFAR100 is class-balanced, with each class evenly represented across the dataset. We selected CIFAR100 to evaluate our defense methods because it presents a significantly more complex challenge compared to both CIFAR10 and SVHN. For this task, the number of labeled samples per class used is 25 and we utilize a WideResNet model with a depth of 28 and a widening factor of 8, featuring 23.4 million parameters.
    \item STL10 is a dataset tailored for semi-supervised learning research, featuring a 10-class classification task. It includes 100,000 unlabeled images and 5,000 labeled images, maintaining class balance across the dataset. Each image is 96×96 pixels with 3 channels.  For this task, the number of labeled samples per class used is 100. Consistent with prior studies, we employ a similar 2-layer WideResNet architecture as used for the CIFAR10 and SVHN datasets.
\end{itemize}
\subsection{Details of the hyperparameters of experiments}
\subsubsection{Training hyperparameters.}
Initially, we train and assess other SSL (Semi-Supervised Learning) methods employing a unified codebase, as found in Wang et al. \cite{wang2022usb}, using their original hyperparameters. These parameters remain unaltered in benign settings without a backdoor adversary to maintain consistency. For fairness in comparison, we adhere to the protocol described by Shejwalkar et al. \cite{shejwalkar2023perils}, conducting experiments over 2,000,000 iterations. The results are presented as the median of 5 runs for CIFAR-10 and SVHN, 3 runs for STL-10, and a single run for CIFAR-100.

\subsubsection{Device.} All the experiments are implemented on NVIDIA RTX2080ti and RTX4090ti.

\begin{figure*}[ht]
    \centering
    \includegraphics[width=0.48\textwidth]{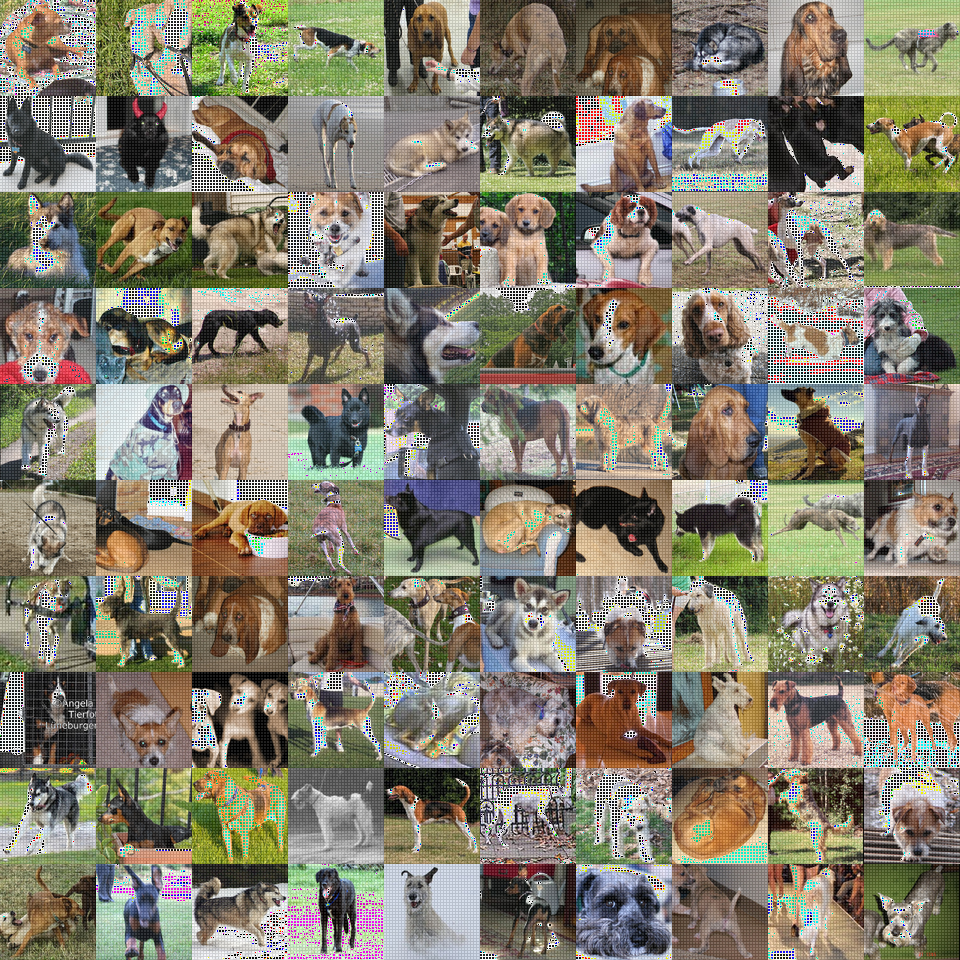}
    \includegraphics[width=0.48\textwidth]{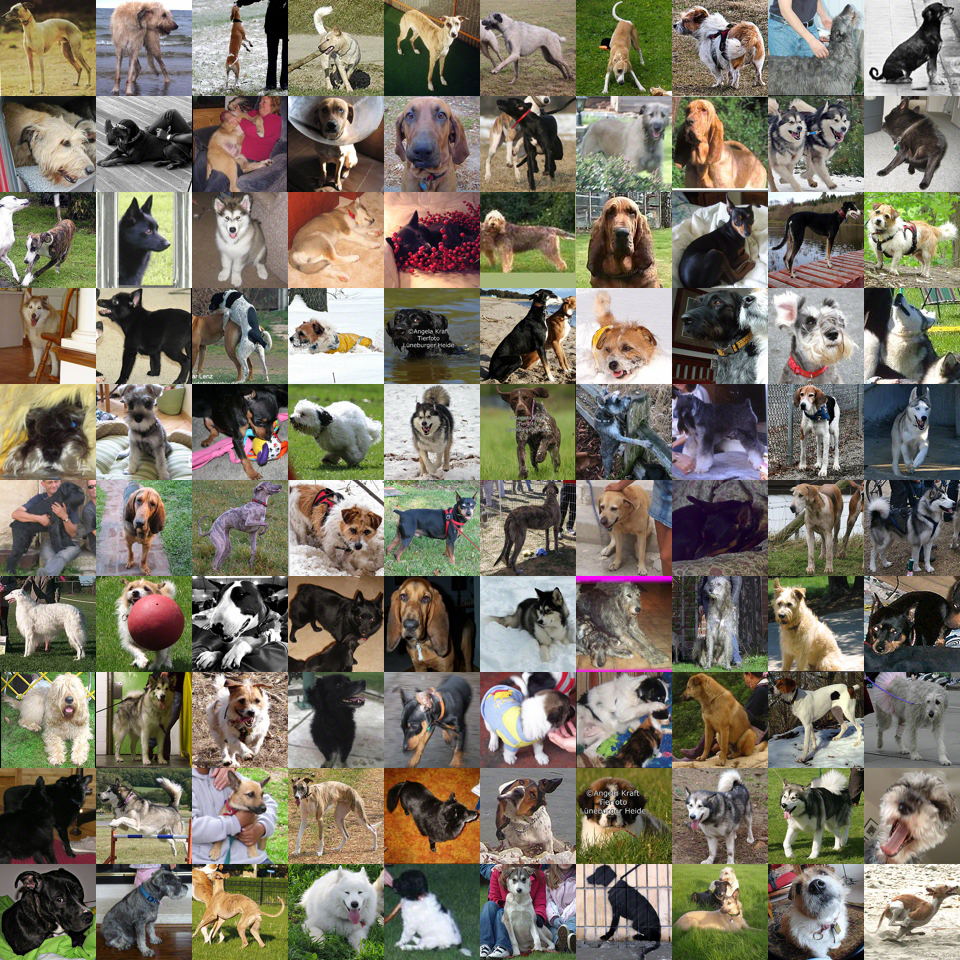}
    \caption{Visualization of the 100 poisoned images (dog as the target class) of two most successful SSL backdoor attacks: Mosaic on the \textbf{left} and Freq on the \textbf{right}.}
    \label{poison_vis_all}
\end{figure*}

\begin{figure}
    \centering
    \includegraphics[width=0.95\columnwidth]{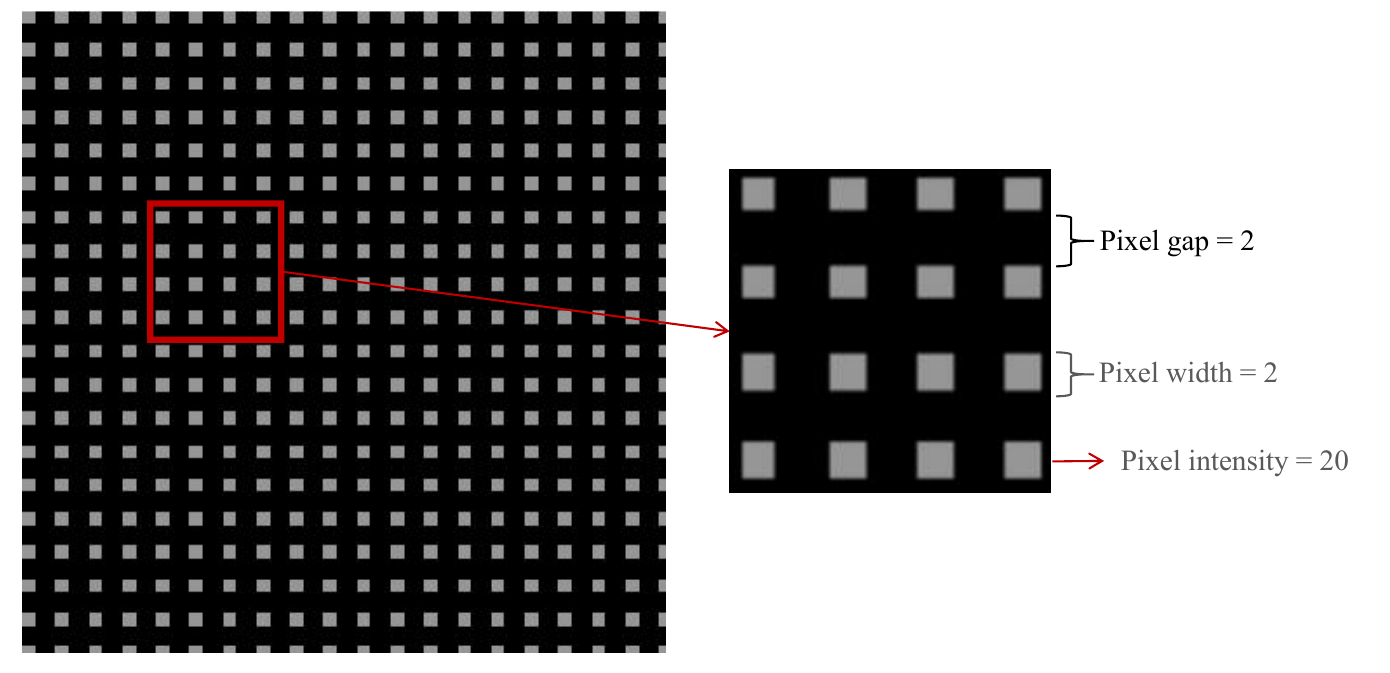}
    \caption{Visualization and the details of the backdoor triggers of Mosaic attack\cite{shejwalkar2023perils}.}
    \label{fig: trigger}
\end{figure}

\subsubsection{Attack hyperparameters.} \label{appdendix: attack setting}
For the baseline attacks including DeHiB \footnote{https://github.com/yanzhicong/DeHiB}, Narcissus \footnote{https://github.com/ruoxi-jia-group/Narcissus-backdoor-attack}, and Freq \footnote{https://github.com/meet-cjli/CTRL}, we utilize code directly provided by the original authors. For the clean-label variant of Badnets, we employ a 4-square trigger, setting the pixel intensity of all four squares to 255. Regarding the Mosaic attack, we apply the attack intensity specified in the original paper by Shejwalkar et al. \cite{shejwalkar2023perils}, setting the gap between each Mosaic attack pixel to 1 for CIFAR-10, CIFAR-100, and SVHN; and to 2 for STL-10. Additionally, due to the unavailability of the Mosaic attack code, we have re-implemented it according to the details provided in the paper and included it in our supplementary materials.As illustrated in Figure \ref{fig: trigger}, we adopt the pixel gap, pixel width, and pixel intensity settings for the backdoor trigger as described in \cite{shejwalkar2023perils}. It is important to note that for results other than Fixmatch and Mixmatch, we maintain these settings consistent with those in \cite{shejwalkar2023perils} to ensure a high Attack Success Rate (\textit{ASR}) for SSL methods without employing backdoor defense techniques.

Additionally, in Figure \ref{poison_vis_all}, we provide a visualization of the two most successful SSL backdoor attacks, Mosaic and Freq. Specifically, we illustrate 100 poisoned data samples with Mosaic-like triggers and frequency-based perturbations. It can be seen that, compared to Mosaic, Freq is more discreet, making it very hard for humans to distinguish between the poisoned and clean images.

\subsubsection{Defend hyperparameters.}
We have discussed the selection of hyperparameters in the main text. Across various datasets, we set the radius of the Gaussian Filter to 1 and the trigger mix-up coefficient to 0.2 to ensure a fair comparison in Table 1 and Table 2. In our experiments, we observed that the radius of the Gaussian Filter could be reduced for the "Freq" attack compared to the "Mosaic" attack to maintain better accuracy on clean data. Generally, these hyperparameters modulate the intensity of the defense strategy. When dealing with stronger attacks, it is advisable to implement more robust defenses, and conversely, less intense defenses may suffice for weaker attacks. For other defensive strategies in our baseline, we provide a brief description of the defenses and discuss the results; for detailed information on these defenses, please refer to the respective original works. For standard fine-tuning, we fine-tune the backdoored model using available benign labeled data. Specifically, we use the labeled training data from the SSL algorithm and adjust the learning rate hyperparameter to achieve optimal results. We aim to maintain the \textit{CA} of the final fine-tuned model within 10\% of the \textit{CA} achieved without any defense. For fine-pruning, we initially prune the parameters of the last convolutional layer of the backdoored model that are not activated by benign data. Subsequently, we fine-tune the pruned model using the available benign labeled data. For NAD, we begin by fine-tuning a backdoored model to create a teacher model with relatively lower \textit{ASR}. Following this, NAD trains the original backdoored model (the student) to align the activations of various convolutional layers between the teacher and the student.

\subsection{Contention between the trigger pattern and the natural feature patterns in the early training stage.} \label{contention}

\begin{figure}[ht] 
    \centering
    \label{poison_vis}
    \includegraphics[width=0.48\columnwidth]{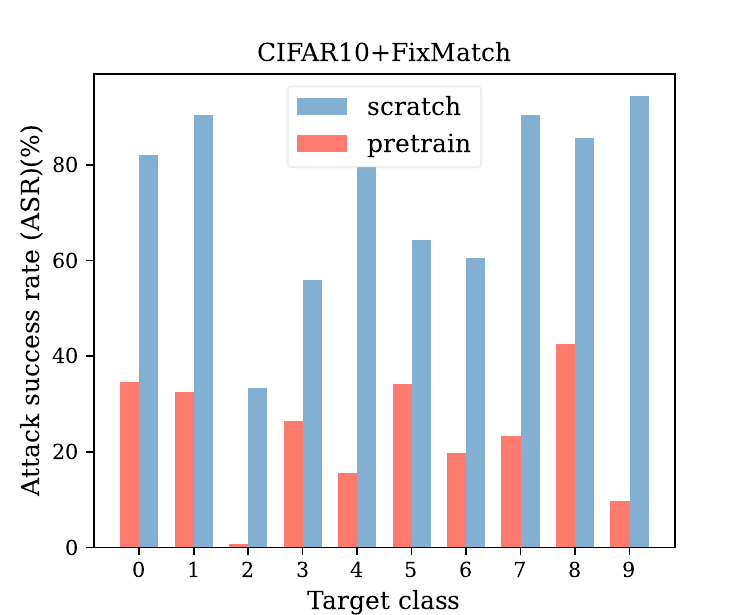}
    \includegraphics[width=0.48\columnwidth]{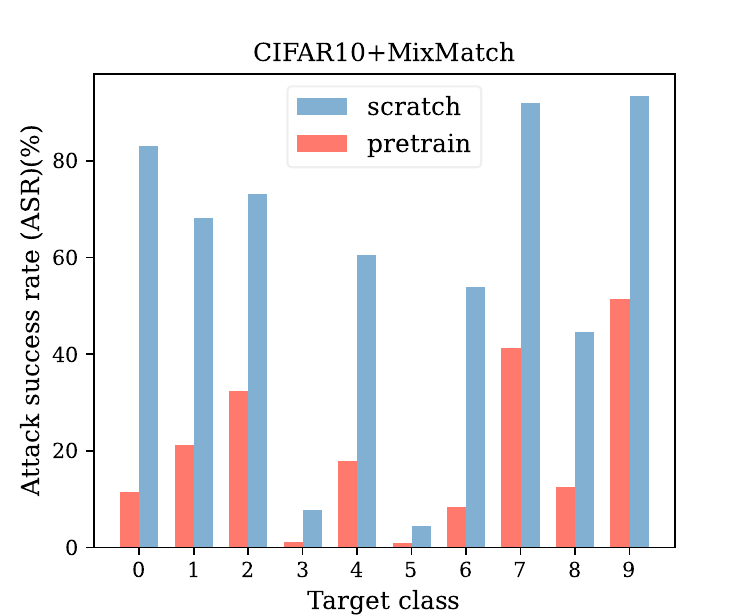}
    \caption{ASR of Mosaic attack for different target classes.}
    \label{hist}
\end{figure}

During our pilot experiments, we observed a notable contention between the trigger pattern and the natural feature patterns early in the training process. Specifically, we trained two models on the same poisoned dataset (CIFAR10 with 100 poisoned images): one model was trained from scratch, while the other was initialized with parameters pre-trained on the clean ImageNet dataset. Figure \ref{hist} demonstrates that compared to training from scratch, utilizing a clean pre-trained model significantly reduces the risk of the model succumbing to an attack, especially evident in the first 100,000 iterations of training. To some extent, the model preferentially models spurious correlations driven by the trigger pattern alongside the causal effects driven by natural feature patterns. This observation actually inspires us to obstruct backdoor attacks in the early training stage, as once the causal effect-driven natural feature patterns are solidly established, introducing spurious correlations becomes much more challenging.

\begin{table*}[htb] 
\caption{\textit{CA} and \textit{ASR} of the proposed Backdoor Invalidator (BI) with different number of labels per class ($n_c$).} \label{tab: label_num}
    \centering
    \begin{subtable}{\textwidth}
    \centering
    \small
    \begin{tabular} {|c|c|c|c|c|c|c|c|c|c|c|c|c|c|}
      \hline
      \multirow{6}{*}{\rotatebox{90}{\textbf{MOSAIC}}} & \multirow{2}{*}{$|\mathcal{D}_u|$} & \multicolumn{2}{c|}{CIFAR10} & \multicolumn{2}{c|}{SVHN} & \multicolumn{2}{c|}{STL10} & \multicolumn{2}{c|}{CIFAR100} \\ \cline{3-10}
      && CA $\uparrow$ & ASR $\downarrow$ & CA $\uparrow$ & ASR $\downarrow$ & CA $\uparrow$ & ASR $\downarrow$ & CA $\uparrow$  &ASR $\downarrow$ \\ \cline{2-10}
      &$n_c=4$ & 78.1 & 3.2 & 92.8 & 0.7 & 60.9 & 1.3 & 49.6 & 0.8 \\ \cline{2-10}
      &$n_c=25$ & 86.4 & 1.9 & 94.2& 0.2 & 83.0 & 2.1 & 65.4 & 3.2 \\ \cline{2-10}
      &$n_c=100$ & 92.7 & 0.6 & 95.1 & 0.5 & 92.0 & 1.2 & 66.9 & 2.1 \\ \cline{2-10}
      &$n_c=400$ & 93.4 & 2.5 & 97.3 & 0.6 & 94.7 & 1.1 & 77.9 & 0.0 \\ \hline
    \end{tabular}
    \end{subtable}

    \begin{subtable}{\textwidth}
    \centering
    \small
    \begin{tabular} {|c|c|c|c|c|c|c|c|c|c|c|c|c|c|}
      \hline
      \multirow{6}{*}{\rotatebox{90}{\textbf{FREQ}}} & \multirow{2}{*}{$|\mathcal{D}_u|$} & \multicolumn{2}{c|}{CIFAR10} & \multicolumn{2}{c|}{SVHN} & \multicolumn{2}{c|}{STL10} & \multicolumn{2}{c|}{CIFAR100} \\ \cline{3-10}
      && CA $\uparrow$ & ASR $\downarrow$ & CA $\uparrow$ & ASR $\downarrow$ & CA $\uparrow$ & ASR $\downarrow$ & CA $\uparrow$  &ASR $\downarrow$ \\ \cline{2-10}
      &$n_c=4$ & 80.6 & 0.2 & 93.3 & 0.4 & 64.1 & 0.6 & 53.2 & 0.5 \\ \cline{2-10}
      &$n_c=25$ & 90.3 & 0.8 & 94.0& 0.9 & 91.4 & 1.0 & 64.8 & 0.9 \\ \cline{2-10}
      &$n_c=100$ & 93.0 & 0.1 & 94.7 & 1.2 & 92.4 & 0.2 & 67.6 & 0.4 \\ \cline{2-10}
      &$n_c=400$ & 93.8 & 0.7 & 98.1 & 0.9 & 95.1 & 0.4& 80.4 & 0.0 \\ \hline
    \end{tabular}
    \end{subtable}
\end{table*}

\section{More detailed experimental results} \label{appendix: exp}
In this section, we present some additional experimental results, including the effects of different backdoor target classes, and the impact of varying the size (number) of labeled and poisoned data. 
\subsection{Influence of different target classes.}\label{appendix: target class}
As depicted in Figure \ref{target class}, \textit{ASR} varies significantly across different target classes in both the "Freq" and "Mosaic" backdoor attacks. This variation is consistent with observations discussed in the foundational literature on backdoor attacks. However, while these phenomena are evident, the underlying reasons remain unclear. We plan to explore these aspects in our future work.

\subsection{Influence of the quantity of labels.}\label{appedix: label_num}

Subsequently, we explore the influence of the number of labels on the performance of our proposed method, BI. For consistency across evaluations, we use the same number of labels per class for different datasets. Specifically, we employ 40, 250, 1000, and 4000 labels for CIFAR10 and SVHN, and 400, 2500, 10000, and 40000 labels for CIFAR100. As illustrated in Table \ref{tab: label_num}, the\textit{CA} of BI decreases sharply as the number of labels decreases, showing a performance gap compared to state-of-the-art Semi-Supervised Learning (SSL) methods like FlexMatch and SemiReward \cite{li2023semireward}. Part of the reason is that our method relies heavily on labeled data for models to capture the feature patterns necessary to counteract potential trigger patterns in unlabeled data. However, this strategy inevitably limits the model's learning capability when the number of labeled data is scarce.

\begin{table*}[htb] 
\caption{\textit{CA} and \textit{ASR} of the proposed Backdoor Invalidator (BI) with different number of poisoned data.} \label{tab: poison_num}
    \centering
    \begin{subtable}{\textwidth}
    \centering
    \small
    \begin{tabular} {|c|c|c|c|c|c|c|c|c|c|c|c|c|c|c|}
      \hline
      \multirow{6}{*}{\rotatebox{90}{\textbf{MOSAIC}}} & \multirow{2}{*}{Poison ratio: $p_c$} & \multicolumn{2}{c|}{CIFAR10} & \multicolumn{2}{c|}{SVHN} & \multicolumn{2}{c|}{STL10} & \multirow{2}{*}{Poison ratio: $p_c$} &  \multicolumn{2}{c|}{CIFAR100} \\ \cline{3-8} \cline{10-11}
      && CA $\uparrow$ & ASR $\downarrow$ & CA $\uparrow$ & ASR $\downarrow$ & CA $\uparrow$ & ASR $\downarrow$ && CA $\uparrow$  &ASR $\downarrow$ \\ \cline{2-11}
      &$p_c=2\%$ & 93.4 & 2.5 & 95.1 & 0.5 & 92.0 & 1.2 &$p_c=20\%$ & 65.4 & 3.2 \\ \cline{2-11}
      &$p_c=10\%$ & 93.2 & 4.2 & 94.8 & 1.1 & 91.7 & 2.1 &$p_c=100\%$ & 61.6 & 21.3 \\ \cline{2-11}
      &$p_c=50\%$ & 92.5 & 7.6 & 93.3 & 4.2 & 91.1 & 5.5 & -- & -- & -- \\ \hline
    \end{tabular}
    \end{subtable}

    \begin{subtable}{\textwidth}
    \centering
    \small
    \begin{tabular} {|c|c|c|c|c|c|c|c|c|c|c|c|c|c|c|}
      \hline
      \multirow{6}{*}{\rotatebox{90}{\textbf{FREQ}}} & \multirow{2}{*}{Poison ratio: $p_c$} & \multicolumn{2}{c|}{CIFAR10} & \multicolumn{2}{c|}{SVHN} & \multicolumn{2}{c|}{STL10} & \multirow{2}{*}{Poison ratio: $p_c$} &  \multicolumn{2}{c|}{CIFAR100} \\ \cline{3-8} \cline{10-11}
      && CA $\uparrow$ & ASR $\downarrow$ & CA $\uparrow$ & ASR $\downarrow$ & CA $\uparrow$ & ASR $\downarrow$ && CA $\uparrow$  &ASR $\downarrow$ \\ \cline{2-11}
      &$p_c=2\%$ & 93.8 & 0.7 & 94.7 & 1.2 & 91.6 & 0.2 &$p_c=20\%$ & 67.6 & 0.4\\ \cline{2-11}
      &$p_c=10\%$ & 92.9 & 1.8 & 94.9 & 0.9 & 92.7 & 0.7 &$p_c=100\%$ & 64.3 & 14.5 \\ \cline{2-11}
      &$p_c=50\%$ & 94.3 & 4.5 & 95.1 & 2.6 & 92.0 & 5.8 & -- & -- & -- \\ \hline
    \end{tabular}
    \end{subtable}
\end{table*}

\subsection{Influence of the number of poisoned data.}\label{appendix: poison_num}

Additionally, we examine how our proposed Backdoor Invalidator (BI) method performs against varying quantities of poisoned data. As illustrated in Table \ref{tab: poison_num}, BI consistently achieves satisfactory results across different poison ratios. Notably, the poison ratio $p$ refers to the percentage of poisoned data in the entire dataset and poison ratio $p_c$ refers to the percentage of poisoned data in the target class. Given that successful attacks in SSL often involve clean-label attacks, the ratios of 0.2\%, 1.0\%, and 5.0\% correspond to 2\%, 10\%, and 50\% of the data in the target class during training for CIFAR10, SVHN, and STL10, respectively, and 20\%, 100\%, and N/A for CIFAR100.

\subsection{Performance when BI is integrated with other SSL methods.} \label{appendix: other ssl methods}

In the main text, due to space constraints, we only integrated BI with FixMatch \cite{sohn2020fixmatch} and FlexMatch \cite{zhang2021flexmatch}. Here, we provide additional evaluations of the proposed plug-in backdoor defense methods with MixMatch \cite{berthelot2019mixmatch} and SemiReward \cite{li2023semireward}. As shown in Table \ref{tab:more_results}, BI consistently achieves low ASR while maintaining performance on clean data across most datasets. For SemiReward, the performance degradation is somewhat more significant. We assume this is because the substitution from consistency loss to complementary label learning in the first training stage hampers the reward function in the original algorithm.

\section{Limitations and future works.} \label{limitaion}

In scenarios where both labeled and unlabeled datasets are compromised, combining our method with existing defense strategies could offer a robust solution. However, in our experiments, integrating existing backdoor defense strategies for supervised learning proved challenging. The typical scarcity of labeled data makes it difficult for the defense methods we tested, such as ABL and FP, to effectively detect poisoned data or prune the implanted backdoor. We acknowledge this as a key limitation of our method and plan to address it in future work.

\begin{table*}[htb] 
\caption{The attack success rate (ASR \%)  and the clean accuracy (CA \%) of another 2 SSL algorithms with our proposed method against 5 representative backdoor attacks.} \label{tab:more_results}
\centering
\begin{subtable}{\textwidth}
\centering
\small
\begin{tabular} {|c|c|c|c|c|c|c|c|c|c|c|c|c|c|c|}
  \hline
  \multirow{6}{*}{\rotatebox{90}{\textbf{CIFAR10}}} & \multirow{2}{*}{Algorithm}  & \multicolumn{2}{c|}{CL-Badnets} & \multicolumn{2}{c|}{Narcissus} & \multicolumn{2}{c|}{DeHiB*} & \multicolumn{2}{c|}{Mosaic} & \multicolumn{2}{c|}{Freq}\\ \cline{3-12}
  
  && CA $\uparrow$ & ASR $\downarrow$ & CA $\uparrow$ & ASR $\downarrow$ & CA $\uparrow$ & ASR $\downarrow$ & CA $\uparrow$ & ASR $\downarrow$ & CA $\uparrow$ & ASR $\downarrow$\\ \cline{2-12}
  &Mixmatch & 93.4 & 16.8 & 93.8 & 2.2 & 93.2 & 22.0 & 94.2 & 96.8 & 93.4 & 83.7 \\ \cline{2-12}
  &Mixmatch w/ BI & 91.2 & 0.2 & 90.9 &0.1 & 91.1 & 0.4 & 89.4 & 1.1 & 90.4 & 2.1 \\\cline{2-12}
  &SemiReward & 95.0 & 7.2 & 94.1 & 5.8 & 95.1 & 10.9 & 93.7 & 89.6 & 95.3 & 47.2 \\  \cline{2-12}
  &SemiReward w/ BI& 93.1 & 0.1 & 92.8 & 0.9 & 92.6 & 1.3 & 90.5 & 4.3 & 91.9 & 2.6 \\ \hline
\end{tabular}
\end{subtable}

\begin{subtable}{\textwidth}
\centering
\small
\begin{tabular} {|c|c|c|c|c|c|c|c|c|c|c|c|c|c|c|}
  \hline
  \multirow{6}{*}{\rotatebox{90}{\textbf{\textbf{SVHN}}}} & \multirow{2}{*}{Algorithm}  & \multicolumn{2}{c|}{CL-Badnets} & \multicolumn{2}{c|}{Narcissus} & \multicolumn{2}{c|}{DeHiB*} & \multicolumn{2}{c|}{Mosaic} & \multicolumn{2}{c|}{Freq}\\ \cline{3-12}
  
  && CA $\uparrow$ & ASR $\downarrow$ & CA $\uparrow$ & ASR $\downarrow$ & CA $\uparrow$ & ASR $\downarrow$ & CA $\uparrow$ & ASR $\downarrow$ & CA $\uparrow$ & ASR $\downarrow$\\ \cline{2-12}
  &Mixmatch  & 93.5 & 5.0 & 93.2 & 0.0 & 94.2 & 2.5 & 92.9 & 87.7 & 93.3 & 90.3 \\ \cline{2-12}
  &Mixmatch w/ BI  & 92.1 & 1.1 & 92.0 & 0.0 & 91.0 & 0.4 & 91.9 & 3.1 & 92.4 & 0.8 \\ \cline{2-12}
  &SemiReward & 92.8 & 6.3& 94.5 & 10.2 & 92.6 & 3.1 & 93.6 & 78.6 & 94.2 & 58.0 \\  \cline{2-12}
  &SemiReward w/ BI & 91.5 & 0.2 & 93.0 & 0.4 & 90.1 & 0.7 & 92.5 & 0.9 & 90.6 & 1.3 \\ \hline
\end{tabular}
\end{subtable}

\begin{subtable}{\textwidth}
\centering
\small
\begin{tabular} {|c|c|c|c|c|c|c|c|c|c|c|c|c|c|c||c|c|}
  \hline
  \multirow{6}{*}{\rotatebox{90}{\textbf{STL10}}} & \multirow{2}{*}{Algorithm} & \multicolumn{2}{c|}{CL-Badnets} & \multicolumn{2}{c|}{Narcissus} & \multicolumn{2}{c|}{DeHiB} & \multicolumn{2}{c|}{Mosaic} & \multicolumn{2}{c|}{Freq}\\ \cline{3-12}
  
  && CA $\uparrow$ & ASR $\downarrow$ & CA $\uparrow$ & ASR $\downarrow$ & CA $\uparrow$ & ASR $\downarrow$ & CA $\uparrow$ & ASR $\downarrow$ & CA $\uparrow$ & ASR $\downarrow$\\ \cline{2-12}
  
  &Mixmatch & 90.3 & 11.6 & 89.6 & 2.0 & 88.8 & 1.1 & 88.9 & 87.5 &90.9 & 86.4\\ \cline{2-12}
  &Mixmatch w/ BI & 90.1 & 0.8 & 87.7& 1.5 & 89.2 & 0.4 & 87.5 & 2.5 &89.4 & 1.3 \\ \cline{2-12}

  &SemiReward & 92.0 & 16.8 & 91.7 & 3.2 & 90.4 & 4.5 & 91.9 & 68.7 & 92.5 & 72.6 \\  \cline{2-12}
  &SemiReward w/ BI & 90.5 & 1.0 & 90.9 & 0.0 & 88.4 & 0.3 & 87.5 & 1.4 & 89.2 & 0.9 \\ \hline
\end{tabular}
\end{subtable}

\begin{subtable}{\textwidth}
\centering
\small
\begin{tabular} {|c|c|c|c|c|c|c|c|c|c|c|c|c|c|c||c|c|}
  \hline
  \multirow{6}{*}{\rotatebox{90}{\textbf{CIFAR100}}} & \multirow{2}{*}{Algorithm} & \multicolumn{2}{c|}{CL-Badnets} & \multicolumn{2}{c|}{Narcissus} & \multicolumn{2}{c|}{DeHiB*} & \multicolumn{2}{c|}{Mosaic} & \multicolumn{2}{c|}{Freq}\\ \cline{3-12}
  && CA $\uparrow$ & ASR $\downarrow$ & CA $\uparrow$ & ASR $\downarrow$ & CA $\uparrow$ & ASR $\downarrow$ & CA $\uparrow$ & ASR $\downarrow$ & CA $\uparrow$ & ASR $\downarrow$\\ \cline{2-12}
  &Mixmatch &65.7 & 29.4 & 70.0 & 1.9 & 67.5 & 9.4 & 71.6 & 92.8 & 66.9 & 87.4 \\ \cline{2-12}
  &Mixmatch w/ BI &62.2 & 0.2 & 67.8 & 0.0 & 65.4 & 0.3 & 63.8 & 2.4 & 64.1 & 0.5 \\\cline{2-12}
  &SemiReward &70.8 & 14.2 & 71.5 & 5.6 & 70.3 & 1.2 & 72.0 & 96.3 & 73.5 & 74.9 \\ \cline{2-12}
  &SemiReward w/ BI & 65.8 & 0.6 & 66.0 & 0.0 & 62.6 & 1.0 & 61.9 & 6.1 & 63.7 & 1.6 \\\hline
\end{tabular}
\end{subtable}
\end{table*}

\section{Proof}\label{appdendix: proof}
\subsection{Proof of Theorem 1}
\begin{proof}
According to Assumption 1 and based on the modified loss function, when learning from examples with complementary labels, we also have 
\[q_i^*(x) = P(\bar{y}=i|x), \forall i \in [c].\]

Let $\mathbf{v}(x)=[P(y=1|x),\cdots,P(y=c|x)]$ and $\bar{\mathbf{v}}(x)=[P(\bar{y}=1|x),\cdots,P(\bar{y}=c|x)]$. We have 
\begin{equation}
\bar{\mathbf{v}}(x) = \mathbf{Q}^\top \mathbf{v}(x),
\end{equation}
which further ensures
\begin{equation}
\mathbf{q}^*(x) = \mathbf{Q}^\top \mathbf{v}(x) = \mathbf{Q}^\top \mathbf{g}^*(x).
\end{equation}

If the transition matrix $\mathbf{Q}$ is invertible, then we find the optimal $\mathbf{g}^*(x)=\mathbf{v}(x)$, which means that the minimizer $\bar{f}^*$ derived by complementary learning coincides with the optimal classifier of semi-supervised learning. 
\end{proof}

\subsection{Proof of Theorem 2}
Before providing the detailed proof of Theorem 2, we first provide some useful lemmas.
\begin{lemma} \cite{yu2018learning} \label{lemma1}
    Let $\bar{\ell}(f(x),\bar{y}) = -\log\left(\frac{\sum_{k=1}^c Q_{ki} \exp(h_k(x))}{\sum_{k=1}^c \exp(h_k(x))} \right)$, where $y^i=0$ and suppose that $h_i(x) \in \mathcal{H}, \forall i \in [c]$, we have $\mathfrak{R}_{m_{i}}(\bar{\ell}\circ\mathcal{F}) \leq c \mathfrak{R}_{m_{i}}(\mathcal{H})$.
\end{lemma}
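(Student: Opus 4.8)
The plan is to bound the Rademacher complexity of the composed class $\bar\ell\circ\mathcal{F}$ by transferring it onto the base class $\mathcal{H}$ one coordinate at a time, so that each of the $c$ score functions $h_1,\dots,h_c\in\mathcal{H}$ contributes a single factor of $\mathfrak{R}_{m_i}(\mathcal{H})$, accumulating to the claimed factor $c$. Concretely, I would regard $\bar\ell$ as a fixed scalar function $\psi:\mathbb{R}^c\to\mathbb{R}$ of the score vector $(h_1(x),\dots,h_c(x))$, namely $\psi(\mathbf{z})=\log(\sum_{k}e^{z_k})-\log(\sum_{k}Q_{ki}e^{z_k})$, first establish that $\psi$ is $1$-Lipschitz in each coordinate, and then invoke a vector/coordinate-wise form of the Ledoux--Talagrand contraction inequality.

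The first substantive step is the Lipschitz estimate. Differentiating $\psi$ with respect to $z_j$ gives $\partial\psi/\partial z_j = p_j-\tilde p_j$, where $p_j=e^{z_j}/\sum_k e^{z_k}$ is the ordinary softmax weight and $\tilde p_j = Q_{ji}e^{z_j}/\sum_k Q_{ki}e^{z_k}$ is a $\mathbf{Q}$-reweighted softmax weight. Because the entries $Q_{ki}$ are nonnegative (they arise as conditional probabilities), both $p_j$ and $\tilde p_j$ lie in $[0,1]$, so $|\partial\psi/\partial z_j|\le 1$ for every $j$. Integrating along coordinate paths then yields the $\ell_1$-Lipschitz bound $|\psi(\mathbf{z})-\psi(\mathbf{z}')|\le\sum_{k=1}^c|z_k-z_k'|$, which is the precise regularity needed for the contraction step and which is notably independent of any normalization of $\mathbf{Q}$.

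The remaining and main step is the contraction itself. The obstacle is that the standard Talagrand lemma only handles a scalar composition $\phi\circ f$, whereas here $\psi$ couples all $c$ functions simultaneously and the supremum in $\mathfrak{R}_{m_i}(\bar\ell\circ\mathcal{F})$ ranges jointly over $h_1,\dots,h_c\in\mathcal{H}$. I would resolve this with a hybrid/peeling induction on the number of coordinates: using the $1$-Lipschitz property in a single coordinate, peel off one $h_k$ at a time via the scalar contraction lemma while holding the remaining coordinates fixed, and control the residual coupling through sub-additivity of the supremum. This establishes the coordinate-wise contraction inequality $\mathbb{E}_\sigma\sup_{h_1,\dots,h_c}\sum_j\sigma_j\psi(h_1(x_j),\dots,h_c(x_j))\le\sum_{k=1}^c\mathbb{E}_\sigma\sup_{h\in\mathcal{H}}\sum_j\sigma_j h(x_j)$. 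I expect this peeling argument to be the technically delicate part, since one must verify that reintroducing an independent Rademacher sign in each isolated coordinate is legitimate and that no cross-coordinate interaction survives.

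Finally, I would assemble the pieces: dividing by $m_i$ and recalling the paper's normalization $\mathfrak{R}_{m_i}(\mathcal{H})=\frac{1}{m_i}\mathbb{E}_\sigma\sup_{h}\sum_j\sigma_j h(x_j)$, the right-hand side of the contraction inequality equals $c\,m_i\,\mathfrak{R}_{m_i}(\mathcal{H})$, so that $\mathfrak{R}_{m_i}(\bar\ell\circ\mathcal{F})\le c\,\mathfrak{R}_{m_i}(\mathcal{H})$, which is exactly the asserted bound.
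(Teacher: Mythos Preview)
Your proposal is correct and follows essentially the same route as the paper: compute $\partial\bar\ell/\partial h_j$, observe it is the difference of two softmax-type weights and hence lies in $[-1,1]$, then invoke a Talagrand-type contraction to strip off the $c$ coordinates and pick up the factor $c$. The only cosmetic difference is that the paper passes from the joint supremum over $(h_1,\dots,h_c)$ to a sum of coordinate-wise suprema and then cites the scalar Talagrand lemma, whereas you phrase the same step as a vector/coordinate-wise contraction via a peeling induction; your formulation is arguably the more careful of the two, but the content is identical.
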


In order to prove Lemma 1, we need the loss function $\bar{\ell}(f(x),\bar{y})$ to be Lipschitz continous with respect to $h_i(x)$, which can be proved by the following lemma,

\begin{proof}
Recall that 
\begin{equation}
    \bar{\ell}(f(x),\bar{y}) = -\log\left(\frac{\sum_{k=1}^c Q_{ki}\exp(h_k(x))}{\sum_{k=1}^c \exp(h_k(x))}\right)
\end{equation}
Take the derivative of $\bar{\ell}(f(x),\bar{y}=i)$ with respect to $h_j(x)$, we have:
\begin{equation} \label{derivative1}
\begin{aligned}
&\frac{\partial \bar{\ell}(f(x),\bar{y}=i)}{\partial h_j(x)} = -\frac{Q_{ji}\exp(h_j(x))}{\sum_{k=1}^c Q_{ki}\exp(h_k(x))} \\ &+ \frac{\exp(h_j(x))}{\sum_{k=1}^c \exp(h_k(x))}.
\end{aligned}
\end{equation}

According to Eq.(\ref{derivative1}), it is easy to conclude that $-1 \leq \frac{\partial \bar{\ell}(f(x),\bar{y}=i)}{\partial h_j(x)} \leq 1$, which also indicates that the loss function is 1-Lipschitz with respect to $h_j(x), \forall j \in [c]$. 

Now we are ready to prove Lemma 1. Since the softmax function preserve the rank of its inputs, $f(x)=\arg\max_{i \in [c]} g_i(x) = \arg\max_{i \in [c]} h_i(x)$. We thus have
\begin{equation}
\begin{aligned}
&\mathfrak{R}_{n_{i}}(\bar{\ell}\circ\mathcal{F})\\
&=\mathbb{E}\left[\sup_{f\in \mathcal{F}}\frac{1}{n_i}\sum_{j=1}^{n_i}\sigma_j\bar{\ell}(f(x_j),\bar{y}_j=i)\right] \\
&= \mathbb{E}\left[\sup_{\arg\max\{h_1(x),\cdots,h_c(x)\} }\frac{1}{n_i}\sum_{j=1}^{n_i}\sigma_j\bar{\ell}(f(x_j),\bar{y}_j=i)\right] \\
&= \mathbb{E}\left[\sup_{\max\{h_1(x),\cdots,h_c(x)\} }\frac{1}{n_i}\sum_{j=1}^{n_i}\sigma_j\bar{\ell}(f(x_j),\bar{y}_j=i)\right] \\
&\leq \mathbb{E}\left[ \sum_{k=1}^c \sup_{h_k(x) }\frac{1}{n_i}\sum_{j=1}^{n_i}\sigma_j\bar{\ell}(f(x_j),\bar{y}_j=i)\right] \\
&= \mathbb{E}\left[ \sum_{k=1}^c \sup_{h_k(x) }\frac{1}{n_i}\sum_{j=1}^{n_i}\sigma_j\log\left(\frac{\sum_{m=1}^c Q_{mi} \exp(h_m(x))}{\sum_{m=1}^c \exp(h_m(x))} \right)\right] \\ 
&= \sum_{k=1}^c \mathbb{E}\left[ \sup_{h_k(x) }\frac{1}{n_i}\sum_{j=1}^{n_i}\sigma_j\log\left(\frac{\sum_{m=1}^c Q_{mi} \exp(h_m(x))}{\sum_{m=1}^c \exp(h_m(x))} \right)\right].
\end{aligned}
\end{equation}
Here, the argument $f\in\mathcal{F}$ of $\sup$ function indicates that $f$ is chosen from the function space $\mathcal{F}$. The function space $\mathcal{F}$ is actually determined by the function space of $\mathbf{h}$ due to the fact that $f = \arg\max\{g_1(x),\cdots,g_c(x)\}=\arg\max \{h_1(x),\cdots,h_c(x)\}$. Thus, the argument of $\sup$ function can be changed to $\arg\max{h_1(x),\cdots,h_c(x)}$ in the second equality. Since $\arg\max \{h_1(x),\cdots,h_c(x)\}$ and $\max\{h_1(x),\cdots,h_c(x)\}$ give the same constraint on $h_i(x), \forall i \in[c]$, the argument is changed to $\max\{h_1(x),\cdots,h_c(x)\}$ in the third equality.

According to Talagrand’s contraction theorem\cite{talagrand1995concentration}, we have
\begin{equation} \label{radl}
\begin{aligned}
\mathfrak{R}_{m_{i}}(\bar{\ell}\circ\mathcal{F}) &\leq \sum_{k=1}^c \mathbb{E}\left[ \sup_{h_k(x) }\frac{1}{n_i}\sum_{j=1}^{n_i}\sigma_j h_k(x)\right]\\
&= \sum_{k=1}^c \mathfrak{R}_{m_{i}}(\mathcal{H}) \\
&= c\mathfrak{R}_{m_{i}}(\mathcal{H}),
\end{aligned}
\end{equation}
The proof is completed.
\end{proof}

\begin{lemma} \label{lemma2}
    Let $\ell(f(x),y) = -\log\left(\frac{\sum_{k=1}^c Q_{ki} \exp(h_k(x))}{\sum_{k=1}^c \exp(h_k(x))} \right)$, where $y^i=1$ and suppose that $h_i(x) \in \mathcal{H}, \forall i \in [c]$, we have $\mathfrak{R}_{n_{i}}(\bar{\ell}\circ\mathcal{F}) \leq c \mathfrak{R}_{n_{i}}(\mathcal{H})$.
\end{lemma}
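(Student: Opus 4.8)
The plan is to mirror, essentially line for line, the proof of Lemma~\ref{lemma1}, with the labeled sample count $n_i$ taking the place of the unlabeled complementary count $m_i$. The crucial point is that the loss in Lemma~\ref{lemma2} has exactly the same functional form as the complementary loss treated in Lemma~\ref{lemma1}, so the Lipschitz analysis carries over unchanged.

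First I would establish that $\ell(f(x),y)$ is $1$-Lipschitz in each logit $h_j(x)$. Writing $\ell(f(x),y=i) = -\log\bigl(\sum_{k=1}^c Q_{ki}\exp(h_k(x))\bigr) + \log\bigl(\sum_{k=1}^c \exp(h_k(x))\bigr)$ and differentiating with respect to $h_j(x)$ yields
\[
\frac{\partial \ell(f(x),y=i)}{\partial h_j(x)} = -\frac{Q_{ji}\exp(h_j(x))}{\sum_{k=1}^c Q_{ki}\exp(h_k(x))} + \frac{\exp(h_j(x))}{\sum_{k=1}^c \exp(h_k(x))},
\]
the same expression as Eq.~(\ref{derivative1}) in the proof of Lemma~\ref{lemma1}. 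Each of the two summands is a convex-combination weight lying in $[0,1]$, so the derivative lies in $[-1,1]$ and $\ell$ is $1$-Lipschitz in $h_j$ for every $j\in[c]$; the label index being $1$ instead of $0$ is irrelevant here, since the formula does not privilege the label coordinate.

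Next I would invoke rank preservation of softmax, $f(x)=\arg\max_i g_i(x)=\arg\max_i h_i(x)$, so that $\mathcal{F}$ is determined by the function class of $\mathbf{h}$. Expanding the Rademacher complexity $\mathfrak{R}_{n_i}(\ell\circ\mathcal{F})$ and replacing $\sup_{f\in\mathcal{F}}$ first by $\sup_{\max\{h_1,\dots,h_c\}}$ and then by $\sum_{k=1}^c \sup_{h_k}$ produces an upper bound carrying a factor $c$; applying Talagrand's contraction theorem to each of the $c$ terms (using the $1$-Lipschitz property just proved) strips off $\ell$ and leaves $\sum_{k=1}^c \mathfrak{R}_{n_i}(\mathcal{H}) = c\,\mathfrak{R}_{n_i}(\mathcal{H})$, which is the claim.

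There is no real obstacle: this is a verbatim analogue of Lemma~\ref{lemma1}. The only items worth a sanity check are that the Lipschitz constant remains $1$ for the labeled-data loss (true, since the derivative formula is identical) and the bookkeeping of $y^i=1$ versus $y^i=0$, which does not enter the bound. If the intended labeled loss is the ordinary cross-entropy $-\log\bigl(\exp(h_i(x))/\sum_{k}\exp(h_k(x))\bigr)$, this is simply the special case $Q_{ki}=\mathbb{I}(k=i)$ of the displayed form, and the same argument applies without change.
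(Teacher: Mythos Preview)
Your proposal is correct and follows essentially the same approach as the paper: establish that the loss is $1$-Lipschitz in each logit $h_j(x)$ and then apply Talagrand's contraction exactly as in Lemma~\ref{lemma1}. The only minor presentational difference is that the paper's proof actually differentiates the ordinary cross-entropy $\ell(f(x),y=i)=-\log\bigl(\exp(h_i(x))/\sum_k \exp(h_k(x))\bigr)$ directly (with a case split on $j=i$ versus $j\neq i$) rather than the $Q$-weighted form displayed in the lemma statement; your closing remark that this is the special case $Q_{ki}=\mathbb{I}(k=i)$ already covers it.
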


Similar to the proof of Lemma 1, we also need the loss function$\ell(f(x), y)$ to be Lipschitz continous with respect to $h_i(x)$ which can be proved as follows:
\begin{proof}
Recall that 
\begin{equation}
    \ell(f(x),y) = -\log\left(\frac{\exp(h_k(x))}{\sum_{k=1}^c \exp(h_k(x))}\right)
\end{equation}
Take the derivative of $\ell(f(x),{y}=i)$ with respect to $h_j(x)$, we have:
\begin{equation} \label{derivative2}
\begin{aligned}
&\frac{\partial \bar{\ell}(f(x),{y}=i)}{\partial h_j(x)} = \left\{
\begin{aligned}
    \frac{\exp(h_j(x))}{\sum_{k=1}^c \exp(h_k(x))} &, i \neq j \\
    -\frac{\sum_{k=1, k\neq i}^c\exp(h_k(x))}{\sum_{k=1}^c \exp(h_k(x))} &, i=j
\end{aligned}
\right.
\end{aligned}
\end{equation}
According to Eq.(\ref{derivative2}), it is also easy to conclude that $-1 \leq \frac{\partial \bar{\ell}(f(x),\bar{y}=i)}{\partial h_j(x)} \leq 1$, which also indicates that the loss function is 1-Lipschitz with respect to $h_j(x), \forall j \in [c]$. Similar to the proof of Lemma2, we also have 
\begin{equation} \label{radu}
\begin{aligned}
\mathfrak{R}_{m_{i}}({\ell}\circ\mathcal{F}) &\leq \sum_{k=1}^c \mathbb{E}\left[ \sup_{h_k(x) }\frac{1}{n_i}\sum_{j=1}^{n_i}\sigma_j h_k(x)\right]\\
&= \sum_{k=1}^c \mathfrak{R}_{m_{i}}(\mathcal{H}) \\
&= c\mathfrak{R}_{m_{i}}(\mathcal{H}),
\end{aligned}
\end{equation}
according to the Talagrand’s contraction \cite{talagrand1995concentration}.
\end{proof}

The above two Lemmas help us unify the hypothesis space of the loss on labeled and unlabeled data on $\mathcal{H}$. Next we try to upper bound the estimation error of the pseudo labels during the training of unlabeled data.
\begin{lemma}
    Suppose the loss function $\bar{\ell}(\cdot)$ on unlabeled data be upper bounded by $M_2$. For some $\epsilon>0$, if $\sum^m_{i=1}\sum^c_{k=1}|\hat{y}^{ik}_u-y^{ik}_u|/m\leq \epsilon$, we have:
    \begin{equation}
        |\hat{R}^{'} (f)-\hat{R}(f)|\leq M_2\epsilon.
    \end{equation}
    where $y^{i}_u$ represents the true label of unlabeled data $x_u^i$ and $\hat{y}^{i}_u$ is the estimated pseudo label.
\end{lemma}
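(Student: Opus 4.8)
The plan is to reduce the statement to a per-example perturbation bound and then exploit that the complementary loss $\bar{\ell}(f(x),\cdot)$ is (affine-)linear in its label argument. First I would observe that the empirical risks $\hat{R}(f)=\hat{R}_l(f)+\hat{R}_u(f)$ and $\hat{R}^{'}(f)=\hat{R}_l(f)+\hat{R}^{'}_u(f)$ differ only through the unlabeled term, since the pseudo-labels never touch the labeled data; hence $\hat{R}^{'}(f)-\hat{R}(f)=\hat{R}^{'}_u(f)-\hat{R}_u(f)$, and it suffices to bound the latter. Writing both unlabeled risks as sample averages, $\hat{R}_u(f)=\frac{1}{m}\sum_{i=1}^m\bar{\ell}(f(x_u^i),y_u^i)$ and $\hat{R}^{'}_u(f)=\frac{1}{m}\sum_{i=1}^m\bar{\ell}(f(x_u^i),\hat{y}_u^i)$, the triangle inequality gives
\[
|\hat{R}^{'}_u(f)-\hat{R}_u(f)|\le\frac{1}{m}\sum_{i=1}^m\bigl|\bar{\ell}(f(x_u^i),\hat{y}_u^i)-\bar{\ell}(f(x_u^i),y_u^i)\bigr|.
\]

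Next I would use the explicit cross-entropy form of the modified loss, $\bar{\ell}(f(x),y)=\ell(\mathbf{Q}^\top g(x),y)=\sum_{k=1}^c y^k\,\ell_k(f(x))$, where $\ell_k(f(x)):=\bar{\ell}(f(x),e_k)$ denotes the loss against the $k$-th one-hot vector. The hypothesis that $\bar{\ell}$ is upper bounded by $M_2$ then reads, coordinate-wise, as $0\le\ell_k(f(x))\le M_2$ for all $k$ and $x$. Therefore, for each $i$,
\[
\bigl|\bar{\ell}(f(x_u^i),\hat{y}_u^i)-\bar{\ell}(f(x_u^i),y_u^i)\bigr|=\Bigl|\sum_{k=1}^c(\hat{y}_u^{ik}-y_u^{ik})\,\ell_k(f(x_u^i))\Bigr|\le M_2\sum_{k=1}^c|\hat{y}_u^{ik}-y_u^{ik}|.
\]
Summing over $i$, dividing by $m$, and invoking the assumption $\sum_{i=1}^m\sum_{k=1}^c|\hat{y}_u^{ik}-y_u^{ik}|/m\le\epsilon$ then yields $|\hat{R}^{'}_u(f)-\hat{R}_u(f)|\le M_2\epsilon$, and by the first step this is exactly $|\hat{R}^{'}(f)-\hat{R}(f)|\le M_2\epsilon$, completing the argument.

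The two triangle-inequality manipulations and the final summation are routine; the step that needs attention is the passage in the second paragraph from ``$\bar{\ell}$ is bounded by $M_2$'' to the coordinate-wise bound $\ell_k\le M_2$, together with the linearity of $\bar{\ell}$ in the label vector. Linearity is immediate from the cross-entropy expression $\ell(p,y)=-\sum_k y^k\log p_k$ with $p=\mathbf{Q}^\top g(x)$, and it remains valid whether the pseudo-labels $\hat{y}_u^i$ are one-hot or soft (possibly unnormalized), since the bound only uses $\ell_k\le M_2$ and $|\hat{y}_u^{ik}-y_u^{ik}|$ and never the normalization $\sum_k y^k=1$. The coordinate-wise boundedness is in turn the same boundedness assumption already invoked for the Rademacher analysis in the preceding lemmas, and holds for the cross-entropy surrogate provided the logits $h_k(x)$ (hence $\mathbf{Q}^\top g(x)$) lie in a bounded range; I expect this to be the only genuine modeling assumption the proof leans on.
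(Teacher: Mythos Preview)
Your proof is correct and reaches the bound by a more direct route than the paper. The paper writes the empirical unlabeled risk in the indicator form $\hat{R}_u(f)=\frac{1}{m}\sum_{i}\sum_{k}\mathbb{I}(y_u^{ik}=0)\,\bar{\ell}(g_k(x_u^i))$ and then splits the label mismatch into two disjoint sets, $\epsilon_1=\frac{1}{m}\sum_{i,k}\mathbb{I}(y_u^{ik}\neq 0\wedge\hat{y}_u^{ik}=0)$ and $\epsilon_2=\frac{1}{m}\sum_{i,k}\mathbb{I}(y_u^{ik}=0\wedge\hat{y}_u^{ik}\neq 0)$, bounding the one-sided differences $\hat{R}'_u-\hat{R}_u\le M_2\epsilon_1$ and $\hat{R}_u-\hat{R}'_u\le M_2\epsilon_2$ separately before concluding via $\max(\epsilon_1,\epsilon_2)\le\epsilon$. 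Your linearity-plus-triangle-inequality argument collapses this case analysis into a single step and, as you note, remains valid for soft (non-binary) pseudo-labels, whereas the paper's decomposition implicitly treats $\hat{y}_u^{ik},y_u^{ik}\in\{0,1\}$. The paper's version has the minor expository benefit of making the two error modes (missed versus spurious complementary labels) explicit; both arguments ultimately rest on the same coordinate-wise bound $0\le\bar{\ell}(g_k(x))\le M_2$ that you correctly flag as the operative assumption.
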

\begin{proof}
    Without loss of generality, we assume that \( \epsilon \) represents the largest pseudo-labeling error, defined as \( \epsilon = \max \left( \frac{1}{m} \sum_{i=1}^m \sum_{k=1}^c |\hat{y}^{ik}_u - y^{ik}_u| \right) \). We can partition this largest pseudo-labeling error into two components:
    \begin{equation}
        \begin{aligned}
            \epsilon_1 = \frac{1}{m}\sum_{i=1}^m \sum_{k=1}^c \mathbb{I}(y_u^{ik}\neq 0 \land \hat{y}_u^{ik}= 0) \\
            \epsilon_2 = \frac{1}{m}\sum_{i=1}^m \sum_{k=1}^c \mathbb{I}(y_u^{ik}= 0 \land \hat{y}_u^{ik} \neq 0)
        \end{aligned}
    \end{equation}
    where \( \epsilon_1 \) and \( \epsilon_2 \) respectively represent the error due to incorrect labels and the error due to missing labels. We then establish the following propositions, which provide the upper and lower bounds for the estimated pseudo-labeling error. Firstly, we prove its upper bound:
    \begin{equation}
        \begin{aligned}
            \hat{R}^{'}_{u}(f)& =\frac{1}{m} \sum_{i=1}^m \sum_{k=1}^c \mathbb{I}(\hat{y}_u^{ik}=0) \bar{\ell}(g_k(x^i_u))\\
            &\leq \frac{1}{m} \sum_{i=1}^m \sum_{k=1}^c \mathbb{I}(y_u^{ik}\neq 0 \land \hat{y}_u^{ik}= 0) \bar{\ell}(g_k(x^i_u)) + \\
            &\frac{1}{m} \sum_{i=1}^m \sum_{k=1}^c \mathbb{I}(y_u^{ik} = 0) \bar{\ell}(g_k(x^i_u)) \\
            &\leq M_2\epsilon_1 + \hat{R}_{u}(f)
        \end{aligned}
    \end{equation}
    Then, we prove the lower bound:
    \begin{equation}
        \begin{aligned}
            \hat{R}_{u}(f)& =\frac{1}{m} \sum_{i=1}^m \sum_{k=1}^c \mathbb{I}({y}_u^{ik}=0) \bar{\ell}(g_k(x^i_u))\\
            &\leq \frac{1}{m} \sum_{i=1}^m \sum_{k=1}^c \mathbb{I}(y_u^{ik}= 0 \land \hat{y}_u^{ik} \neq 0) \bar{\ell}(g_k(x^i_u)) + \\
            &\frac{1}{m} \sum_{i=1}^m \sum_{k=1}^c \mathbb{I}(\hat{y}_u^{ik}=0) \bar{\ell}(g_k(x^i_u)) \\
            &\leq M_2\epsilon_2 + \hat{R}^{'}_{u}(f)
        \end{aligned}
    \end{equation}
    By combining two sides, we can complete the proof:
    \begin{equation}\label{est_error}
        |\hat{R}^{'} (f)-\hat{R}(f)|\leq M_2 \max(\epsilon_1, \epsilon_2)\leq M_2\epsilon.
    \end{equation}
\end{proof}

Now, we give the proof of Theorem 2 in the main text, let us first reclaim it as follows.
\begin{theorem}
    Suppose $\bar{\pi}_k$ and $\pi_k$ are given. Let the loss function $\ell(\cdot)$ on labeled and loss function $\bar{\ell}(\cdot)$ on unlabeled data be upper bounded respectively by $M_1$ and $M_2$. For some $\epsilon>0$, if $\sum^m_{i=1}\sum^c_{k=1}|\hat{y}^{ik}_u-y^{ik}_u|/m\leq \epsilon$. Then, for any $\delta>0$, with the probability $1-c\delta$:
    \begin{equation}
    \begin{aligned}
        &\Tilde{R}(\hat{f}^{'})- \Tilde{R}(f^*) \leq \sum_{k=1}^c \Bigg( 4c \pi_k \mathfrak{R}_{n_{k}}(\mathcal{H}) + 4c \bar{\pi}_k \mathfrak{R}_{m_{k}}(\mathcal{H})  \\
        &+ 2\pi_k M_1\sqrt{\frac{\log{1/\delta}}{2n_{k}}} + 2\bar{\pi}_k M_2\sqrt{\frac{\log{1/\delta}}{2m_{k}}} \Bigg) + 2M_2\epsilon,
    \end{aligned}
    \end{equation}
    where $y^{i}_u$ represents the true label of unlabeled data $x_u^i$ and $\hat{y}^{i}_u$ is the estimated pseudo label; $\mathfrak{R}_{n}(\mathcal{H})=\mathbb{E}\left[ \sup_{h_k(x) }\frac{1}{n}\sum_{j=1}^{n}\sigma_j h_k(x)\right]$is the Rademacher complexity and $\{\sigma_1,\cdots,\sigma_n\}$ are Rademacher variables uniformly distributed from $\{-1,1\}$.
\end{theorem}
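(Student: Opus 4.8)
The plan is to run the classical excess-risk decomposition for empirical risk minimization, using Lemmas~\ref{lemma1} and~\ref{lemma2} to pass from the two loss classes to the base class $\mathcal{H}$, and Lemma~3 (the pseudo-labeling bound) to absorb the cost of training with estimated complementary labels rather than the true ones.

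First I would record the telescoping identity for the empirical minimizer $\hat f^{'}$ of $\hat R^{'}(f)$ and the population minimizer $f^*$ of $\Tilde R(f)$:
\begin{align*}
\Tilde R(\hat f^{'})-\Tilde R(f^*)
={}& \big(\Tilde R(\hat f^{'})-\hat R(\hat f^{'})\big)+\big(\hat R(\hat f^{'})-\hat R^{'}(\hat f^{'})\big)+\big(\hat R^{'}(\hat f^{'})-\hat R^{'}(f^*)\big)\\
&+\big(\hat R^{'}(f^*)-\hat R(f^*)\big)+\big(\hat R(f^*)-\Tilde R(f^*)\big).
\end{align*}
The third summand is $\le 0$ by optimality of $\hat f^{'}$ for $\hat R^{'}$; the second and fourth are each $\le M_2\epsilon$ by Lemma~3, invoked once at $\hat f^{'}$ and once at $f^*$ (this symmetric use is exactly what produces the final $2M_2\epsilon$); and the first and fifth are each at most $\sup_{f}\lvert\Tilde R(f)-\hat R(f)\rvert$. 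Hence it remains to bound $2\sup_f\lvert\Tilde R(f)-\hat R(f)\rvert+2M_2\epsilon$.

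Next I would split $\Tilde R=R_l+R_u$ and $\hat R=\hat R_l+\hat R_u$, and within each part decompose over classes via the priors: $R_l(f)=\sum_k\pi_k R_l^{(k)}(f)$ and $R_u(f)=\sum_k\bar\pi_k R_u^{(k)}(f)$, where $R_l^{(k)}$ is the risk conditioned on $y=k$ (estimated from the $n_k$ labeled points of that class) and $R_u^{(k)}$ is the complementary risk conditioned on $\bar y=k$ (estimated from the $m_k$ points carrying that complementary label). For a fixed class and loss type, the standard three-step argument applies: (i) McDiarmid's bounded-difference inequality, using the $M_1$- resp. $M_2$-boundedness of the losses, gives $\sup_f\lvert R^{(k)}-\hat R^{(k)}\rvert\le \mathbb{E}\,\sup_f\lvert R^{(k)}-\hat R^{(k)}\rvert + M\sqrt{\log(1/\delta)/(2n_k)}$ (and likewise with $m_k$) with probability $1-\delta$; (ii) symmetrization bounds the expected supremum by $2\,\mathfrak R_{n_k}(\ell\circ\mathcal F)$ resp. $2\,\mathfrak R_{m_k}(\bar\ell\circ\mathcal F)$; (iii) Lemmas~\ref{lemma2} and~\ref{lemma1} replace these by $2c\,\mathfrak R_{n_k}(\mathcal H)$ and $2c\,\mathfrak R_{m_k}(\mathcal H)$. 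Re-weighting by $\pi_k$, $\bar\pi_k$, summing over $k$, multiplying by the outer factor $2$ from the ERM reduction, and taking a union bound over the classes yields exactly the stated right-hand side: the $4c$ coefficients arise as $2\times 2\times c$, the $2\pi_k M_1$ and $2\bar\pi_k M_2$ terms from the outer $2$ times the McDiarmid tails, and the $1-c\delta$ confidence from the union bound.

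The step requiring the most care is the per-class decomposition: the counts $n_k$ and $m_k$ are themselves random, so one should either condition on them (treating each per-class subsample as i.i.d. of its realized size) or assert the bound on the event that the counts equal their observed values; I would make this conditioning explicit and note that the Rademacher terms are then read conditionally. The remaining points are routine given the earlier lemmas: that the cross-entropy-style losses $\ell$ and $\bar\ell$ are bounded (supplying $M_1,M_2$) and $1$-Lipschitz in the logits — already established in the proofs of Lemmas~\ref{lemma1} and~\ref{lemma2}, so Talagrand contraction is legitimate — and that Lemma~3 is applied at both $\hat f^{'}$ and $f^*$.
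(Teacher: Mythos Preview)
Your proposal is correct and follows essentially the same route as the paper: the same five-term telescoping with $\hat R^{'}(\hat f^{'})-\hat R^{'}(f^*)\le 0$, Lemma~3 applied twice to give $2M_2\epsilon$, the per-class decomposition $R_l=\sum_k\pi_kR_l^{(k)}$, $R_u=\sum_k\bar\pi_kR_u^{(k)}$, then the standard Rademacher bound plus Lemmas~\ref{lemma1}--\ref{lemma2} for the contraction step. Your remark about conditioning on the realized counts $n_k,m_k$ is a genuine subtlety the paper leaves implicit, so noting it is an improvement rather than a deviation.
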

\begin{proof}
The convergence rates of generalization bounds of multi-class learning are at most $O(c^2/\sqrt{n})$ with respect to $c$ and $n$ \cite{ishida2017learning,mohri2018foundations}. To reduce the dependence on $c$ of our derived convergence rate, we rewrite ${R_l}(f)$ and ${R_u}(f)$ as follows:

\begin{equation}
\begin{aligned}
&{R_l}(f) =\int_{x} \sum_{i=1}^c P({y}=i) P(x|{y}=i) {\ell}(f(x),{y}=i) dx \\
&=\sum_{i=1}^c P({y}=i) \int_{x} P(x|{y}=i)  {\ell}(f(x),{y}=i) dx \\
&=\sum_{i=1}^c {\pi}_i {R}_{l}^i(f),
\end{aligned}
\end{equation}
\begin{equation}
\begin{aligned}
&{R_u}(f) =\int_{x} \sum_{i=1}^c P(\bar{y}=i) P(x|\bar{y}=i) \bar{\ell}(f(x),\bar{y}=i) dx \\
&=\sum_{i=1}^c P(\bar{y}=i) \int_{x} P(x|\bar{y}=i)  \bar{\ell}(f(x),\bar{y}=i) dx \\
&=\sum_{i=1}^c \bar{\pi}_i {R}_{u}^i(f),
\end{aligned}
\end{equation}

where ${R}_{u}^i(f) = \mathbb{E}_{x\sim P(x|\bar{y}=i)} \bar{\ell}(f(x),\bar{y}=i)$ and ${R}_{l}^i(f) = \mathbb{E}_{x\sim P(x|{y}=i)} {\ell}(f(x),{y}=i)$. Additionally, we denote the class prior of being labeled (true label and complementary label) as $\bar{\pi}_i=P(\bar{y}=i)$ and ${\pi}_i=P({y}=i)$. 

Then, we show an upper bound for the estimation error of our method. This upper bound illustrates a convergence rate for the classier learned with our proposed pseudo complementary labels to the optimal one learned with true labels.
    \begin{equation}
    \begin{aligned}
        &\Tilde{R}(\hat{f}^{'})- \Tilde{R}(f^*) 
        = \Tilde{R}(\hat{f}^{'})-\hat{\Tilde{R}}(\hat{f}^{'}) + \hat{\Tilde{R}}(\hat{f}^{'}) - \hat{\Tilde{R}}^{'}(\hat{f}^{'})\\
        &+ \hat{\Tilde{R}}^{'}(\hat{f}^{'}) -\hat{\Tilde{R}}^{'}(f^*)+ \hat{\Tilde{R}}^{'}(f^*) - \hat{\Tilde{R}}(f^*) + \hat{\Tilde{R}}(f^*) - \Tilde{R}(f^*)\\
        &\leq 2\sup_{f\in\mathcal{F}}|\Tilde{R}(f)-\hat{\Tilde{R}}(f)| + 2\sup_{f\in\mathcal{F}}|\hat{\Tilde{R}}(f)-\hat{\Tilde{R}}^{'}(f)|\\
        &= 2\sup_{f\in\mathcal{F}}|R_l(f)-\hat{R_l}(f)| + 2\sup_{f\in\mathcal{F}}|R_u(f)-\hat{R_u}(f)| \\
        &+ 2\sup_{f\in\mathcal{F}}|\hat{R}_u(f)-\hat{R}_u^{'}(f)|\\
        &\leq 2\sum_{i=1}^c \bar{\pi}_i\sup_{f\in \mathcal{F}} |{R}^{i}_u(f)-\hat{R}^{i}_u(f)| + 2\sup_{f\in\mathcal{F}}|\hat{R}_u(f)-\hat{R}_u^{'}(f)| \\
        &+2\sum_{i=1}^c {\pi}_i\sup_{f\in \mathcal{F}} |{R}^{i}_l(f)-\hat{R}^{i}_l(f)|
    \end{aligned}
\end{equation}
where the first inequality holds because $ \hat{\Tilde{R}}^{'}(\hat{f}^{'})-\hat{\Tilde{R}}^{'}(f^*)<0$ and the error in the last line is the sum of generalization error and pseudo labeling estimation error.

Next, let us respectively upper bound the generalization error on labeled and unlabeled data.
Suppose ${\pi}_i=P({y}=i)$ is given, let the loss function on labeled data be upper bounded by $M_1$. Then, for any $\delta>0$, with the probability $1-c\delta$, we have
\begin{equation}\label{gen_l}
\begin{aligned}
{R}_l(\hat{f}^{'})&-{R}_l({f}^*) \leq 2\sup_{f\in \mathcal{F}}|{R}_l(f)-{R}_l(f)| \\
&\leq2\sum_{i=1}^c {\pi}_i\sup_{f\in \mathcal{F}} |{R}_l(f)-\hat{R}_l(f)|\\
&\leq 2\sum_{i=1}^c {\pi}_i\left(2\mathfrak{R}_{n_{i}}( {\ell}\circ\mathcal{F})+M_1\sqrt{\frac{\log{1/\delta}}{2n_{i}}}\right)\\
&= \sum_{i=1}^c \left( 4 {\pi}_i \mathfrak{R}_{n_{i}}( {\ell}\circ\mathcal{F}) + 2{\pi}_iM_1\sqrt{\frac{\log{1/\delta}}{2n_{i}}} \right)\\
&\leq \sum_{i=1}^c \left( 4c {\pi}_i \mathfrak{R}_{n_{i}}( \mathcal{H}) + 2{\pi}_iM_1\sqrt{\frac{\log{1/\delta}}{2n_{i}}} \right),
\end{aligned}
\end{equation}
$\mathfrak{R}_{n_{i}}({\ell}\circ\mathcal{F})=\mathbb{E}\left[\sup_{f\in \mathcal{F}}\frac{1}{n_i}\sum_{j=1}^{n_i}\sigma_j\bar{\ell}(f(X_j),\bar{Y}_j=i)\right]$ is the corresponding Rademacher complexity. The second line is the results in \cite{bartlett2002rademacher} and the fifth line is due to the results in Eq.\ref{radl}.

Similarly, we can derive that:
Suppose $\bar{\pi}_i=P(\bar{y}=i)$ is given, let the loss function on unlabeled data be upper bounded by $M_2$. Then, for any $\delta>0$, with the probability $1-c\delta$, we have
\begin{equation}\label{gen_u}
\begin{aligned}
{R}_u(\hat{f}^{'})&-{R}_u({f}^*) \leq 2\sup_{f\in \mathcal{F}}|{R}_u(f)-{R}_u(f)| \\
&\leq2\sum_{i=1}^c \bar{\pi}_i\sup_{f\in \mathcal{F}} |{R}_u(f)-\hat{R}_u(f)|\\
&\leq 2\sum_{i=1}^c \bar{\pi}_i\left(2\mathfrak{R}_{m_{i}}( {\bar{\ell}}\circ\mathcal{F})+M_2\sqrt{\frac{\log{1/\delta}}{2m_{i}}}\right)\\
&= \sum_{i=1}^c \left( 4 \bar{\pi}_i \mathfrak{R}_{m_{i}}( {\bar{\ell}}\circ\mathcal{F}) + 2{\pi}_iM_2\sqrt{\frac{\log{1/\delta}}{2m_{i}}} \right)\\
&\leq \sum_{i=1}^c \left( 4c \bar{\pi}_i \mathfrak{R}_{m_{i}}( \mathcal{H}) + 2{\pi}_iM_2\sqrt{\frac{\log{1/\delta}}{2m_{i}}} \right),
\end{aligned}
\end{equation}
$\mathfrak{R}_{m_{i}}({\ell}\circ\mathcal{F})=\mathbb{E}\left[\sup_{f\in \mathcal{F}}\frac{1}{m_i}\sum_{j=1}^{m_i}\sigma_j\bar{\ell}(f(X_j),\bar{Y}_j=i)\right]$ is the corresponding Rademacher complexity.The second line is the results in \cite{bartlett2002rademacher} and the fifth line is due to the results in Eq.\ref{radu}.

Then, combining the results in Eq.\ref{gen_l}, Eq.\ref{gen_u} and Eq.\ref{est_error}, we can have:

Suppose $\bar{\pi}_k$ and $\pi_k$ are given. Let the loss function $\ell(\cdot)$ on labeled and loss function $\bar{\ell}(\cdot)$ on unlabeled data be upper bounded respectively by $M_1$ and $M_2$. For some $\epsilon>0$, if $\sum^m_{i=1}\sum^c_{k=1}|\hat{y}^{ik}_u-y^{ik}_u|/m\leq \epsilon$. Then, for any $\delta>0$, with the probability $1-c\delta$:
\begin{equation}
    \begin{aligned}
        &\Tilde{R}(\hat{f}^{'})- \Tilde{R}(f^*) \leq \\
        &2\sum_{i=1}^c \bar{\pi}_i\sup_{f\in \mathcal{F}} |{R}^{i}_u(f)-\hat{R}^{i}_u(f)| + 2\sup_{f\in\mathcal{F}}|\hat{R}_u(f)-\hat{R}_u^{'}(f)| \\
        &+2\sum_{i=1}^c {\pi}_i\sup_{f\in \mathcal{F}} |{R}^{i}_l(f)-\hat{R}^{i}_l(f)|\\
        &\leq \sum_{k=1}^c \Bigg( 4 \pi_k \mathfrak{R}_{n_{k}}(\ell\circ\mathcal{F}) + 4 \bar{\pi}_k \mathfrak{R}_{m_{k}}(\bar{\ell}\circ\mathcal{F})  \\
        &+ 2\pi_k M_1\sqrt{\frac{\log{1/\delta}}{2n_{k}}} + 2\bar{\pi}_k M_2\sqrt{\frac{\log{1/\delta}}{2m_{k}}} \Bigg) + 2M_2\epsilon\\
        &\leq \sum_{k=1}^c \Bigg( 4c \pi_k \mathfrak{R}_{n_{k}}(\mathcal{H}) + 4c \bar{\pi}_k \mathfrak{R}_{m_{k}}(\mathcal{H})  \\
        &+ 2\pi_k M_1\sqrt{\frac{\log{1/\delta}}{2n_{k}}} + 2\bar{\pi}_k M_2\sqrt{\frac{\log{1/\delta}}{2m_{k}}} \Bigg) + 2M_2\epsilon.
    \end{aligned}
\end{equation}
\end{proof}
which completes the proof.

\end{document}